\newcommand{\smallquotes}[1]
\newtheorem{propo}{Proposition}
\newtheorem{cor}{Corollary}
\theoremstyle{definition}
\newtheorem{rem}{Remark}
\newcommand{\nosemic}{\renewcommand{\@endalgocfline}{\relax}}
\newcommand{\dosemic}{\renewcommand{\@endalgocfline}{\algocf@endline}}
\let\oldnl\nl
\newcommand{\nonl}{\renewcommand{\nl}{\let\nl\oldnl}}
\def\widebar{\accentset{{\cc@style\underline{\mskip10mu}}}}
\def\wideubar{\underaccent{{\cc@style\underline{\mskip10mu}}}}
\begin{document}

\title{Linear Complexity Gibbs Sampling for\\Generalized Labeled Multi-Bernoulli Filtering}

\author{
        Changbeom~Shim,
        Ba-Tuong~Vo,
        Ba-Ngu~Vo,
        Jonah~Ong, and
        Diluka~Moratuwage
\thanks{This work was supported by the Australian Research Council under Linkage Project LP200301507 and Future Fellowship FT210100506.}
\thanks{The authors are with the School of Electrical Engineering, Computing and Mathematical Sciences, Curtin University, Bentley, WA 6102, Australia (e-mail: changbeom.shim@curtin.edu.au; ba-tuong.vo@curtin.edu.au; ba-ngu.vo@curtin.edu.au; jonahosx25@gmail.com; diluka.moratuwage@curtin.edu.au).}
}

\markboth{PREPRINT: IEEE TRANSACTIONS ON SIGNAL PROCESSING, VOL. 71, 1981-1994, 2023, DOI: 10.1109/TSP.2023.3277220.}{ }

\maketitle

\begin{abstract}
    Generalized Labeled Multi-Bernoulli (GLMB) densities arise in a host of multi-object system applications analogous to Gaussians in single-object filtering. However, computing the GLMB filtering density requires solving NP-hard problems. To alleviate this computational bottleneck, we develop a linear complexity Gibbs sampling framework for GLMB density computation. Specifically, we propose a tempered Gibbs sampler that exploits the structure of the GLMB filtering density to achieve an $\mathcal{O}(T(P+M))$ complexity, where $T$ is the number of iterations of the algorithm, $P$ and $M$ are the number hypothesized objects and measurements. This innovation enables the GLMB filter implementation to be reduced from an $\mathcal{O}(TP^{2}M)$ complexity to $\mathcal{O}(T(P+M+\log T)+PM)$. Moreover, the proposed framework provides the flexibility for trade-offs between tracking performance and computational load. Convergence of the proposed Gibbs sampler is established, and numerical studies are presented to validate the proposed GLMB filter implementation.
\end{abstract}

\begin{IEEEkeywords}
    Random finite sets, Multi-object tracking, Generalized labeled multi-Bernoulli, Tempered Gibbs sampling.
\end{IEEEkeywords}

\IEEEpeerreviewmaketitle

\section{Introduction}\label{s:introduction}

\IEEEPARstart{T}{he} aim of multi-object tracking (MOT) is to estimate the number of objects and their trajectories from noisy sensor data. The challenges are the unknown and time-varying number of objects, accompanied by false alarms, misdetections, and data association uncertainty, culminating in computational bottlenecks for most real world applications. Notwithstanding this, numerous solutions have been developed, with multiple hypothesis tracking \cite{blackman1999designtracking}, joint probabilistic data association \cite{bar2011tracking}, and random finite set (RFS) \cite{mahler2014advances} being the most widely used approaches. The RFS approach, in particular, is gaining substantial interest due to its versatile multi-object state space models \cite{mahler2003multitargetPHD}, and efficient solutions \cite{beard2020solution}. Various tractable RFS multi-object filters have been devised ranging from the probability hypothesis density localization filters \cite{mahler2003multitargetPHD, mahler2007phdCPHD} to the generalized labeled multi-Bernoulli (GLMB) tracking filter~\cite{vo2013labeled}.

The GLMB filter \cite{vo2013labeled} is an analytic solution to the RFS multi-object filter that provides tracks and their \textit{provisional identities/labels} to meet MOT requirements \cite{blackman1999designtracking}. The unique feature of this formulation is the provision for principled trajectory estimation (even with only single-scan filtering), capable of tracking millions of objects, online \cite{beard2020solution}. Moreover, the GLMB family of densities is furnished with elegant mathematical properties, offering a versatile set of tools including, conjugacy \cite{vo2013labeled}, closure under truncation with analytic truncation error \cite{vo2019multiscan}, analytic approximation of general labeled multi-object density with minimal Kullback-Leibler divergence \cite{papi2015generalized}, analytic Cauchy-Schwarz divergence and void probabilities \cite{beard2017void}. These properties enabled MOT with multiple sensors and scans \cite{vo2019multiscan, vo2019multisensor, moratuwage2022multi}, non-standard models \cite{papi2015generalized, li2017multiobject, nguyen2021tracking, ong2020bayesian}, unknown system parameters \cite{trezza2022multi}, multi-object control solutions \cite{beard2017void, van2019online}, as well as distributed implementations \cite{wang2015distributed, li2017robust, li2018computationally}. The GLMB filter is also amenable to parallelization that reduces computation times. For instance, in \cite{beard2020solution}, spatial search was used to decompose the filtering density into independent GLMBs that are processed in parallel, while, in \cite{herrmann2021distributed}, a parallel centralized implementation for multiple sensors was developed. The GLMB filter has also found a host of applications from robotics \cite{deusch2015labeled}, sensor networks \cite{gostar2020cooperative, gostar2020centralized}, cell biology \cite{nguyen2021tracking, hadden2017stem} to audio/video processing \cite{ong2020bayesian, ong2022audio}.

The main computational bottleneck in GLMB filtering is the truncation of the multi-object filtering density \cite{vo2013labeled}. Truncation by discarding terms with small weights minimizes the $L_{1}$-error \cite{vo2019multiscan}, and can be posed as a \textit{ranked assignment problem}, solvable by Murty's algorithm \cite{murty1968letter} with cubic complexity in both the number of measurements and hypothesized objects \cite{miller1997optimizing, pedersen2008algorithm}. A more efficient solution based on Gibbs sampling (GS) was proposed in \cite{vo2017efficient}, which incurs an $\mathcal{O}(TP^{2}M)$ complexity, where $T$, the number of iterates of the algorithm, dominates the number of hypothesized objects and measurements $P$ and $M$. GS is an efficient Markov Chain Monte Carlo (MCMC) technique for sampling from complex probability distributions, popularized by the seminal work of Geman and Geman \cite{geman1984stochastic}, which opened up applications in many disciplines ranging from statistics, engineering to computer science, and is still an active research topic.

Following the strategy of selecting significant GLMB components by random sampling \cite{vo2017efficient}, a number of GLMB truncation techniques have been developed. In \cite{yang2018efficient}, an approximate GLMB filter with linear complexity in the number of hypothesized objects was proposed by neglecting the standard data association requirement of at most one measurement per object. While this technique can be modified to accommodate the data association requirement, the complexity reverts to quadratic in number of hypothesized objects \cite{yang2018efficient}. In \cite{wolf2020deterministic}, a herded Gibbs sampling implementation of the labeled multi-Bernoulli (LMB) filter \cite{reuter2014labeled}--a one-term approximation of the GLMB filter--was developed. However, this implementation is slower than the systematic-scan Gibbs sampling (SGS) GLMB implementation \cite{vo2017efficient}, not to mention that, compared to the (exact) GLMB filter, the LMB filter is more prone to track fragmentation and track switching. In \cite{yu2017algorithms} and \cite{saucan2018distributed}, the cross-entropy method \cite{nguyen2014solving} was applied, respectively, to multi-sensor GLMB filtering and its distributed version, but with higher complexity than the SGS implementation in \cite{vo2019multisensor}.

To alleviate the computational bottleneck in GLMB filtering, this paper proposes a tempered Gibbs sampling (TGS) framework for selecting significant GLMB components, with linear complexity, i.e., $\mathcal{O}(T(P+M))$. Similar to the widely known random-scan Gibbs sampling (RGS), TGS randomly selects a coordinate to update, but provides the mechanism to improve mixing and sample diversity \cite{zanella2019scalable}. However, generic TGS incurs an $\mathcal{O}(TP^{2}M)$ complexity. To this end, we develop an innovative decomposition of the conditionals that enables TGS to be reduced to $\mathcal{O}(T(P+M))$ complexity, with negligible additional memory (compared to generic $\mathcal{O}(TP^{2}M)$ TGS). The samples generated by the proposed TGS algorithm converge to a tempered distribution (not necessarily the same as the original stationary distribution). Furthermore, keeping in mind that TGS is regarded as the combination of importance sampling and MCMC, the importance-weighted samples indeed converge to the original stationary distribution \cite{zanella2019scalable}. The drastic reduction in computational complexity facilitates the development of multi-object control solutions, which require fast computation of the multi-object filtering density for online operations \cite{wang2018multi, van2020multi, panicker2020tracking, wu2020mm}. Moreover, our solution is not only restricted to GLMB truncation, but amenable to a wide range of applications of the ranked assignment problem \cite{burkard2012assignment}.

The proposed TGS framework enables RGS to be implemented with $\mathcal{O}(T(P+M))$ complexity, whereas generic RGS incurs $\mathcal{O}(TPM)$. Further, we propose deterministic-scan Gibbs sampling (DGS), an efficient algorithm using deterministic coordinate selection, as opposed to RGS's completely random coordinate selection. DGS's better mixing and sample diversity compared to RGS are validated by numerical studies. In addition, using DGS, we show how an exact implementation of SGS can be accomplished with $\mathcal{O}(TPM)$ complexity. We also present numerical studies in MOT to discuss the trade-offs between tracking performance and computational load in the proposed TGS framework. Due to the computation of the cost matrix, the resultant GLMB filter implementation incurs an additional $\mathcal{O}(PM)$ complexity. In summary, our main contribution is a TGS framework for GLMB truncation that:
\begin{itemize}
    \item Incurs a linear complexity of $\mathcal{O}(T(P+M))$, with negligible additional memory, by exploiting the structure of the GLMB filtering density; and
    \item  Admits as special cases, RSG, DGS, and SGS implementations, with $\mathcal{O}(T(P+M))$, $\mathcal{O}(TM)$, and $\mathcal{O}(TPM)$ complexities, respectively.
\end{itemize}
We validate the proposed approach via a series of comprehensive numerical experiments, with considerations for computational efficiency, tracking accuracy, and sample diversity.

The rest of this paper is organized as follows. Section \ref{s:preliminaries} provides the necessary background on GLMB filtering and SGS-based GLMB truncation. In Section \ref{s:linear_glmb}, we present linear complexity GS for GLMB filtering density truncation. Numerical studies are presented in Section \ref{s:experiment}, and concluding remarks are given in Section \ref{s:conclusion}.

\section{Preliminaries}\label{s:preliminaries}
This section presents the necessary background for the development of the main result of this article. We first outline the basics of generalized labeled multi-Bernoulli (GLMB) filtering in Subsection \ref{ss:glmb_filtering}, and then summarize the Gibbs sampling (GS) approach to GLMB truncation in Subsection \ref{ss:gibbs_glmb_truncation}. Throughout this paper, single-object states and multi-object states are respectively represented by lower case letters (e.g., $x$) and upper case letters (e.g., $X$). Further, boldfaced symbols denote labeled states or distributions (e.g., $\boldsymbol{x}$, $\boldsymbol{X}$, $\boldsymbol{\pi}$). Frequently used notations are summarized in Table~\ref{tbl:notation}.

\begin{table}[t!]
    \renewcommand{\arraystretch}{1.15}
    \caption{Summary of frequent notations.}
    \vspace*{-0.2cm}
        \footnotesize
        \label{tbl:notation}
        \begin{center}
            \begin{tabular}{|c|l|}
            \hline
                \textbf{\small{Notation}} & \textbf{\small{Description}}\\
            \hline
                $1_S(\cdot)$ & Indicator function for a given set $S$\\
                $|X|$ & Cardinality of a set $X$\\
                $\langle f,g\rangle$ & Inner product, $\int f(x)g(x)dx$ of two functions $f$ and $g$\\
                $\alpha$ & Mixture parameter of proposal distribution\\
                $\beta$ & Tempering parameter of proposal distribution\\
                $\delta_X[Y]$ & Kronecker-$\delta$, $\delta_{X}[Y]=1$ if $X=Y$, $0$ otherwise\\
                $\Delta(\boldsymbol{X})$ & Distinct label indicator $\delta_{|\boldsymbol{X}|}[|\mathcal{L}(\boldsymbol{X})|]$\\
                $\mathcal{F}(S)$ & Class of all finite subsets of a given set $S$\\
                $f(x_{m:n})$ & $f(x_{m}),...,f(x_{n}),f:x_{i}\mapsto f(x_{i})$\\
                $f_{B}(x,\ell)$ & Birth probability density of $x$ given label $\ell$\\
                $\Gamma$ & Space of all (extended) association map $\gamma$\\
                $\gamma$ & (Extended) association map\\
                $\gamma\circ\mathcal{L}(\cdot)$ & Function composition $\gamma(\mathcal{L}(\cdot))$\\
                $g(z|\boldsymbol{x})$ & Single-object likelihood\\
                $g(Z|\boldsymbol{X})$ & Multi-object likelihood\\
                $h^X$ & Multi-object exponential, $\prod_{x\in X}h(x)$, with $h^{\emptyset}=1$\\
                $\mathcal{L}(\boldsymbol{x})$ & Label of a (labeled) state $\boldsymbol{x}$, $\mathcal{L}((x,\ell))=\ell$\\
                $\mathcal{L}(\boldsymbol{X})$ & Labels of a (labeled) set $\boldsymbol{X}$, $\{\mathcal{L}(\boldsymbol{x}):\boldsymbol{x}\in\boldsymbol{X}\}$\\
                $\mathcal{L}(\gamma)$ & Live labels of $\gamma$, $\{\ell\in\mathbb{L}:\gamma(\ell)\geq1\}$\\
                $M$ & Number of measurements (at the next time)\\
                $P$ & Number of hypothesized objects\\
                $P_{B}(\ell)$ & Birth probability of label $\ell$\\
                $P_{D}(\boldsymbol{x})$ & Detection probability of labeled state $\boldsymbol{x}$\\
                $P_{S}(\boldsymbol{x})$ & Survival probability of labeled state $\boldsymbol{x}$\\
                $\boldsymbol{\pi}$ & Multi-object filtering density\\
                $\pi$ & Stationary distribution\\
                $\phi$ & Proposal distribution\\
                $\rho$ & Coordinate probability distribution\\
                $T$ & Number of iterates\\
                $x_{m:n}$ & $x_m, x_{m\text{+}1}, ..., x_n$\\
                $\xi$ & History of association maps\\
                $\Xi$ & Some finite discrete space\\
            \hline
        \end{tabular}
        \vspace*{-0.1cm}
    \end{center}
\end{table}

\subsection{GLMB Filtering}\label{ss:glmb_filtering}
    \subsubsection{Multi-object State}\label{sss:state}
        In GLMB filtering, the system state to be estimated at each time is the set $\boldsymbol{X}$ of labeled states of the underlying objects, called the \textit{multi-object state} \cite{vo2013labeled}. Each labeled single-object state $\boldsymbol{x}\in\boldsymbol{X}$ is an ordered pair $(x,\ell)$ in the product space $\mathbb{\mathbb{X}\times\mathbb{L}}$, where $\mathbb{X}$ is a (finite dimensional) \textit{state space}, and $\mathbb{L}$ is a discrete space called the \textit{label space}. Let $\mathbb{B}_{\tau}$ denote the label space of objects born at time $\tau$, then the space $\mathbb{L}$ of all labels up to time $k$ is given by the disjoint union $\uplus_{\tau\texttt{=}0}^{k}\mathbb{B}_{\tau}$. The state $x$ of an object varies with time, while its label or identity $\ell$ (usually consists of the object's time of birth and an index to distinguish those born at the same time) is time-invariant. The \textit{trajectory} of an object is a sequence of consecutive labeled states with a common label.

        The cardinality $\left|\boldsymbol{X}\right|$ (i.e., number of elements) of the multi-object state $\boldsymbol{X}$ varies with time due to the appearance and disappearance of objects. In addition, a multi-object state $\boldsymbol{X}$ at any time must have distinct labels. More concisely, let $\mathcal{L}\left(\boldsymbol{x}\right)$ denote the label of $\boldsymbol{x}$, and for any finite $\boldsymbol{X}\subset\mathbb{X}\times\mathbb{L}$, define the label set $\mathcal{L}\left(\boldsymbol{X}\right)\triangleq\left\{ \mathcal{L}\left(\boldsymbol{x}\right):\boldsymbol{x}\in\boldsymbol{X}\right\}$ and \textit{distinct label indicator} $\Delta(\boldsymbol{X})\triangleq\delta_{|\boldsymbol{X}|}\big[|\mathcal{L}(\boldsymbol{X})|\big]$. Then for $\boldsymbol{X}$ to be valid multi-object state, we require $\Delta\left(\boldsymbol{X}\right)=1$.
        
        In Bayesian estimation, the single-object state and the measurement are modeled as random variables. Hence, for multi-object estimation, the multi-object state and measurement are modeled as Random Finite Sets (RFSs). In the following we describe the so-called standard multi-object models for the dynamics and observations in a multi-object system.

    \subsubsection{Multi-object Dynamic}\label{sss:dynamic}
        For simplicity, we omit the time subscript ``\textit{k}'', and use the subscript ``+'' for the next time $k+1$ when there is no ambiguity. Each element $\boldsymbol{x}=(x,\ell)$ of the current multi-object state $\boldsymbol{X}$ either survives with probability $P_{S}(\boldsymbol{x})$ and evolves to state $\boldsymbol{x}_{\text{+}}=(x_{\text{+}},\ell_{\text{+}})$ at the next time according to the transition density $f_{S,\text{+}}(x_{\text{+}}|x,\ell)\delta_{\ell}[\ell_{\text{+}}]$, or dies with probability $1-P_{S}(\boldsymbol{x})$ \cite{vo2013labeled}. The term $\delta_{\ell}[\ell_{\text{+}}]$ in the transition density ensures the object retains the same label. In addition to the surviving objects, new objects can be born. New born objects are usually modeled by an LMB RFS, where an object with state $\boldsymbol{x}_{\text{+}}=(x_{\text{+}},\ell_{\text{+}})$ is born at the next time with probability $P_{B,\text{+}}(\ell_{\text{+}})$, and the state density $f_{B,\text{+}}(x_{\text{+}},\ell_{\text{+}})$. The multi-object state $\boldsymbol{X}_{\text{\text{+}}}$ at the next time is the union of surviving objects and new born objects, described by the \textit{multi-object Markov transition density} $\boldsymbol{f}_{\text{+}}\left(\boldsymbol{X}_{\text{+}}|\boldsymbol{X}\right)$ \cite{vo2013labeled}. It is assumed that, conditional on the current multi-object state, objects survive and move independently of each other, and that new born objects and surviving objects are independent \cite{mahler2014advances}.

    \subsubsection{Multi-object Observation}\label{sss:obervation}
        Each element $\boldsymbol{x}\in\boldsymbol{X}$ is either detected with probability $P_{D}(\boldsymbol{x})$ and generates an observation $z$ at the sensor with likelihood $g(z|\boldsymbol{x})$, or misdetected with probability $1-P_{D}(\boldsymbol{x})$. In addition to the detections, the sensor also receives clutter, modeled by a Poisson RFS with intensity function $\kappa$. The multi-object observation $Z$ is the union of detections and clutter. It is assumed that conditional on the multi-object state, objects are detected independently from each other and that clutter and detections are independent.

        The association of objects with sensor measurements (at time $k$) is described by a \textit{positive 1-1} mapping $\gamma\!:\!\mathbb{L}\!\rightarrow\!\{\texttt{\text{-}}1{\textstyle :}|Z|\}$, i.e., a mapping where \textit{no two distinct arguments are mapped to the same positive value} \cite{vo2013labeled}. The (extended) association map\footnote{Originally called extended association maps, herein referred to as association maps for brevity.} $\gamma$ specifies that object (with label) $\ell$ generates measurement $z_{\gamma(\ell)}\in Z$, with $\gamma(\ell)=0$ if it is undetected, and $\gamma(\ell)=-1$ if it does not exist. The positive 1-1 property ensures each measurement comes from at most one object. Let $\mathcal{L}(\gamma)\triangleq\{\ell\in\mathbb{L}:\gamma(\ell)\geq0\}$ denote the set of \textit{live labels}\footnote{Note the distinction from $\mathcal{L}(\boldsymbol{X})$, the labels of a labeled set.} of $\gamma$, and $\Gamma$ denote the space of all association maps, then the \textit{multi-object likelihood function} can be written as \cite{vo2019multisensor}
        \begin{equation}
            g(Z|\boldsymbol{X}) \propto \sum_{\gamma\in\Gamma} \delta_{\mathcal{L}(\gamma)}[\mathcal{L}(\boldsymbol{X})][\psi_{Z}^{(\gamma\circ\mathcal{L}(\cdot))}(\cdot)]^{\boldsymbol{X}},
        \end{equation}
        where $\gamma\circ\mathcal{L}(\cdot)=\gamma(\mathcal{L}(\cdot))$ and
        \begin{equation}
            \psi_{\{z_{1:M}\}}^{(j)}(\boldsymbol{x})=
            \begin{cases}
                \frac{P_{D}(\boldsymbol{x})g(z_{j}|\boldsymbol{x})}{\kappa(z_{j})}, & j>0\\
                1-P_{D}(\boldsymbol{x}), & j=0
            \end{cases}.
        \end{equation}

    \subsubsection{The GLMB Filter}\label{sss:filtering}
        All statistical information about the underlying state is contained in the \textit{filtering density}–probability density of the current state conditioned on all measurements up to (and including) the current time. Given the multi-object filtering density $\boldsymbol{\pi}$ at the current time, the propagation to the next time step is given by \cite{mahler2014advances}
        \begin{equation}
            \boldsymbol{\pi}_{\text{+}}\!\left(\boldsymbol{X}_{\text{+}}|Z_{\text{+}}\right)	\propto g\left(Z_{\text{+}}|\boldsymbol{X}_{\text{+}}\right)\!\int\! \boldsymbol{f}_{\text{+}}\left(\!\boldsymbol{X}_{\text{+}}|\boldsymbol{X}\right)\!\boldsymbol{\pi}\!\left(\boldsymbol{X}\right)\delta\boldsymbol{X}.
        \end{equation}
        
        Under the standard multi-object system model, the multi-object filtering density takes on the GLMB form \cite{vo2013labeled}:
        \begin{equation}\label{eq:GLMB}
            \boldsymbol{\pi}\left(\boldsymbol{X}\right) = \Delta\left(\boldsymbol{X}\right)\sum_{\xi\in\Xi}\left[p^{(\xi)}\right]^{\boldsymbol{X}}\sum_{I\in\mathcal{F}(\mathbb{L})}w^{\left(\xi,I\right)}\delta_{I}[\mathcal{L}\left(\boldsymbol{X}\right)],
        \end{equation}
        where $\Xi$ is some finite discrete space, $p^{\left(\xi\right)}\left(\cdot,\ell\right)$ is a probability density on $\mathbb{X}$, and $w^{\left(\xi,I\right)}$ is a non-negative weight satisfying the condition $\sum_{\left(\xi,I\right)\in\,\Xi\times\mathcal{F}(\mathbb{L})}w^{\left(\xi,I\right)}=1$. To obtain a multi-object state estimate from the GLMB density \eqref{eq:GLMB}, we first find the most probable cardinality $n^{*}$ from the cardinality distribution
        \begin{equation}\label{eq:cardinality_distribution}
            \text{Pr}(|\boldsymbol{X}|=n) \triangleq \sum_{(\xi,I)\in\Xi\times\mathcal{F}(\mathbb{L})}\delta_{n}[|I|]w^{(\xi,I)},
        \end{equation}
        and then the highest-weighted component $(\xi^{*},I^{*})$ with cardinality $|I^{*}|=n^{*}$, see \cite{vo2013labeled}. The state estimate for each object $\ell\in I^{*}$ can be taken as the mean (or mode) of $p^{(\xi^{*})}(\cdot,\ell)$. Alternatively, the entire trajectory of object $\ell\in I^{*}$ can be estimated as described in \cite{vo2019multiscan, van2019online}.

        For compactness, we write a GLMB density in terms of its parameters:
        \begin{equation}\label{eq:GLMB_param}
            \boldsymbol{\pi} \triangleq \left\{ \left(p^{\left(\xi\right)},w^{\left(\xi,I\right)}\right)\right\} {}_{\left(\xi,I\right)\in\,\Xi\times\mathcal{F}(\mathbb{L})}.
        \end{equation}
        Given a current GLMB filtering density of the form \eqref{eq:GLMB_param}, its propagation to the next time is the GLMB \cite{vo2017efficient, vo2019multiscan}
        \begin{equation}\label{eq:GLMB_JPU}
            \boldsymbol{\pi}_{\text{+}}\! = \!\left\{ \!\left(p_{Z_{\text{+}}}^{\left(\xi,\gamma_{\text{+}}\right)},w_{Z_{\text{+}}}^{\left(\xi,\gamma_{\text{+}},I_{\text{+}}\right)}\right)\!\right\} _{\left(\xi,\gamma_{\text{+}},I_{\text{+}}\right)\in\,\Xi\times\Gamma_{\!\text{+}}\mathcal{\times F}\left(\mathbb{L}_{\text{+}}\right)},
        \end{equation}
        where the new parameters are given by
        \allowdisplaybreaks
        \begin{align}
        \phantom{aaaaaaaa}
            &\begin{aligned}
                \mathllap{\!\!p_{Z_{\text{+}}}^{\left(\xi,\gamma_{\text{+}}\right)}\!\left(\cdot,\ell\right)} &= \frac{\bar{p}_{\text{+}}^{\left(\xi,\gamma_{\text{+}}\left(\ell\right)\right)}\left(\cdot,\ell\right)\psi_{Z_{\text{+}}}^{\left(\gamma_{\text{+}}\left(\ell\right)\right)}\left(\cdot,\ell\right)}{\bar{\psi}_{Z_{\text{+}}}^{\left(\xi,\gamma_{\text{+}}\right)}\left(\ell\right)},
            \end{aligned}\\
            &\begin{aligned}\label{eq:weight_recursion}
                \mathllap{\!\!w_{Z_{\text{+}}}^{\left(\xi,\gamma_{\text{+}},I_{\text{+}}\right)}}  &\propto \delta_{\mathcal{L}(\gamma_{\text{+}})}[I_{\text{+}}]\negthinspace\!\negthinspace\sum\limits _{I\in\mathcal{F}(\mathbb{L})}\negthinspace\negthinspace\negthinspace\negthinspace w^{\left(\xi,I\right)}1_{\mathcal{\!F}(I\uplus\mathbb{B}_{\text{+}})\!\!}\left(I_{\text{+}}\right)\!w_{Z_{\text{+}}}^{\left(\xi,I,\gamma_{\text{+}}\right)}\!,
            \end{aligned}\hspace{-0.3cm}\\
            &\begin{aligned}
                \mathllap{\!\!\bar{p}_{\text{+}}^{\left(\xi,j\right)}\!\left(\cdot,\ell\right)} &=
                \begin{cases}
                    \negthinspace\frac{\int\!f_{\text{+}}\!\left(\cdot|x,\ell\right)P_{S}\left(x,\ell\right)p^{\left(\xi\right)}\!\left(x,\ell\right)dx}{\bar{P}_{S}^{\left(\xi\right)}\left(\ell\right)}, & \negthinspace\negthinspace\negthinspace\negthinspace\negthinspace\ell\in I,j\!\geq0\\
                    \negthinspace f_{B,\text{+}}\left(\cdot,\ell\right), & \negthinspace\negthinspace\negthinspace\negthinspace\negthinspace\!\ell\in\mathbb{B}_{\text{+}},j\!\geq0
                \end{cases}\negthinspace,\hspace{-0.3cm}
            \end{aligned}\\
            &\begin{aligned}
                \mathllap{\!\!\bar{P}_{S}^{(\xi)}\left(\ell\right)} &= \left\langle P_{S}\left(\cdot,\ell\right),p^{\left(\xi\right)}\left(\cdot,\ell\right)\right\rangle,
            \end{aligned}\\
            &\begin{aligned}
                \mathllap{\!\!\bar{\psi}_{Z_{\text{+}}}^{\left(\xi,\gamma_{\text{+}}\right)}\left(\ell\right)} &= \left\langle \bar{p}_{\text{+}}^{\left(\xi,\gamma_{\text{+}}\left(\ell\right)\right)}\left(\cdot,\ell\right),\psi_{Z_{\text{+}}}^{\left(\gamma_{\text{+}}\left(\ell\right)\right)}\left(\cdot,\ell\right)\right\rangle,
            \end{aligned}\\
            &\begin{aligned}\label{eq:weight-incr}
                \mathllap{\!\!w_{Z_{\text{+}}}^{\left(\xi,I,\gamma_{\text{+}}\right)}} &= 1_{\Gamma_{\!\text{+}}\!}\left(\gamma_{\text{+}}\right)\prod_{\ell\in I\uplus\mathbb{B}_{\text{+}}}\eta_{Z_{\text{+}_{\!}},\ell}^{(\xi,I)}(\gamma_{\text{+}}(\ell)),
            \end{aligned}\\
            &\begin{aligned}
                \mathllap{\!\!\eta_{Z_{\text{+}},\ell}^{(\xi,I)}(j)} &=
                \!\begin{cases}
                    1-\bar{P}_{S}^{(\xi)\!}(\ell), & \!\ell\in I,\text{ }j\!<0\\
                    \bar{P}_{S\!}^{(\xi)\!}(\ell)\bar{\psi}_{Z_{\text{+}_{\!}}}^{(\xi,j)\!}(\ell_{\!}), & \!\ell\in I,\text{ }j\!\geq0\\
                    1-P_{B\!,\text{+}}(\ell), & \!\ell\in\mathbb{B}_{\text{+}},\text{ }j\!<0\\
                    P_{B\!,\text{+}}(\ell)\bar{\psi}_{Z_{\text{+}}}^{(\xi,j)}(\ell), & \!\ell\in\mathbb{B}_{\text{+}},\text{ }j\!\geq0
                \end{cases}.
            \end{aligned}
        \end{align}

        Note that each component $(\xi,I)$ of the GLMB filtering density at time $k$ generates a (very large) set $\left\{ \left(\xi,I,\gamma_{\text{+}},I_{\text{+}}\right):I_{\text{+}}\in\mathcal{F}(\mathbb{L}_{\text{+}}),\gamma_{\text{+}}\in\Gamma_{\!\text{+}}\right\}$ of children components to the next time. Due to the terms $\delta_{\mathcal{L}(\gamma_{\text{+}})}[I_{\text{+}}]$ and $1_{\mathcal{F}(I\uplus\mathbb{B}_{\text{+}})\!}\left(I_{\text{+}}\right)$ in \eqref{eq:weight_recursion}, we only need to consider components with $I_{\text{+}}\subseteq I\uplus\mathbb{B}_{\text{+}}$ and $\mathcal{L}(\gamma_{\text{+}})=I_{\text{+}}$. While this is a big reduction, in general the total number of GLMB components with non-zero weights still grows super-exponentially with time. Implementing the GLMB filter requires truncating the GLMB filtering density. Truncation by keeping the most highly weighted components minimizes the $L_{1}$ truncation error \cite{vo2019multiscan}.

    \subsection{Gibbs Sampling for GLMB Truncation}\label{ss:gibbs_glmb_truncation}
        Truncating the children of the GLMB component (indexed by $(\xi,I)$) amounts to selecting the $\gamma_{\text{+}}$'s with significant $w_{Z_{\text{+}}}^{\left(\xi,I,\gamma_{\text{+}}\right)}$. For a given component $(\xi,I)$, let us enumerate $I=\{\ell_{1:R}\}$, $\mathbb{B}_{\text{+}\!}=\{\ell_{R\text{+}1:P}\}$, and $Z_{\text{+}}=\{z_{1:M}\}$, and abbreviate
        \begin{equation}\label{eq:eta}
            \eta_{i}(j) \triangleq \eta_{Z_{\text{+}_{\!}},\ell_{i}}^{(\xi,I)}(j),
        \end{equation}
        where $i\in\{1\mathnormal{:}P\}$, and $j\in\{\texttt{-}1\mathnormal{:}M\}$. Let $\pi$ be a (discrete) probability distribution on $\{\texttt{-}1\mathnormal{:}M\}^{P}$ defined by
        \begin{equation}\label{eq:theta_joint_dis}
            \pi(\gamma_{\text{+}}) \propto 1_{{\Gamma_{\!\text{+}}}}(\gamma_{\text{+}})\prod\limits _{i=1\!}^{P}\eta_{i}(\gamma_{\text{+}}(\ell_{i})).
        \end{equation}
        Note that due to the factor $1_{\Gamma\!_{\text{+}}}(\gamma_{\text{+}})$, any sample from \eqref{eq:theta_joint_dis} is a valid association map. Further, it follows from \eqref{eq:weight-incr} that the probability of sampling $\gamma_{\text{+}}$ is $\pi(\gamma_{\text{+}}) \propto w_{Z_{\text{+}}}^{\left(\xi,I,\gamma_{\text{+}}\right)}$. Hence, truncating the contribution from the parent component $(\xi,I)$ can be accomplished by sampling from $\pi$.

        GS is a computationally efficient Markov Chain Monte Carlo (MCMC) technique for sampling from complex probability distributions whose conditionals can be computed/sampled at low cost. In GLMB truncation, we aim to maximize the number of distinct significant samples, rather than focusing on the actual distribution of the samples as per MCMC inference. All distinct samples can be used regardless of their distribution, because each distinct sample constitutes a term in the approximant (the larger the weights, the smaller the approximation error) \cite{vo2017efficient}. Hence, it is not necessary to discard burn-ins and wait for samples from the stationary distribution. 

        Systematic-scan GS (SGS) is the classical approach that samples from the stationary distribution $\pi$ by constructing a Markov chain with transition kernel \cite{geman1984stochastic, casella1992explaining}
        \begin{equation*}
            \pi(\gamma'_{\text{+}}|\gamma_{\text{+}}) 
            = \prod\limits _{i=1}^{P}\pi_{i}(\gamma'_{\text{+}}(\ell_{i})|\gamma'_{\text{+}}(\ell_{1:i\text{-}1}),\gamma_{\text{+}}(\ell_{i\text{+}1:P})),
        \end{equation*}
        where the $i$-th \textit{conditional}, defined on $\{\texttt{-}1\mathnormal{:}M\}$, is given by
        \begin{equation*}
            \pi_{i}(\gamma'_{\text{+}}(\ell_{i})|\gamma'_{\text{+}}(\ell_{1:i\text{-}1}),\gamma_{\text{+}}(\ell_{i\text{+}1:P}))\propto\pi(\gamma'_{\text{+}}(\ell_{1:i}),\gamma_{\text{+}}(\ell_{i\text{+}1:P})).
        \end{equation*}
        This means, for a given $\gamma_{\text{+}}$, the next state $\gamma'_{\text{+}}$ of the chain is generated one component after another, by sampling $\gamma'_{\text{+}}(\ell_{i})$ from $\pi_{i}(\cdot|\gamma'_{\text{+}}(\ell_{1:i\text{-}1}),\gamma_{\text{+}}(\ell_{i\text{+}1:P})), i=1,2, \cdots, P$.

        For GLMB truncation, the conditionals are the categorical distributions given by \cite[Proposition 3]{vo2017efficient}, which is restated in a slightly different form as follows.
        \begin{propo}\label{prop:conditional}
            For each $i\in\{1\mathnormal{:}P\}$, let $\ell_{\bar{i}}$ denote $\ell_{1:i\text{-}1,i\text{+}1:P}$. Then the $i$-th conditional, defined on $\{\texttt{-}1\mathnormal{:}M\}$, is given by
            \begin{equation}\label{eq:i-th_conditional}
                \pi_{i}(\cdot|\gamma_{\text{+}}(\ell_{\bar{i}}))
                = \frac{\widetilde{\pi}_{i}(\cdot|\gamma_{\text{+}}(\ell_{\bar{i}}))}{\left\langle \widetilde{\pi}_{i}(\cdot|\gamma_{\text{+}}(\ell_{\bar{i}})),1\right\rangle },    
            \end{equation}
            where
            \begin{equation}\label{eq:masking}
                \!\!\!\!\widetilde{\pi}_{i}(j|\gamma_{\text{+}}(\ell_{\bar{i}})) \triangleq 
                \begin{cases}
                \eta_{i}(j), & \!j\!<\!1\\
                \eta_{i}(j)(1-1_{\{\gamma_{\text{+}}(\ell_{\bar{i}})\}}(j)), & \!j\!\in\!\{1\mathnormal{:}M\}
                \end{cases}.
            \end{equation}
        \end{propo}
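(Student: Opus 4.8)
The plan is to read off the conditional directly from its defining proportionality. By construction, the $i$-th conditional on $\{\texttt{-}1\mathnormal{:}M\}$ is obtained by freezing the coordinates $\gamma_{\text{+}}(\ell_{\bar{i}})$ and letting $\gamma_{\text{+}}(\ell_{i})=j$ range, so that $\pi_{i}(j|\gamma_{\text{+}}(\ell_{\bar{i}}))\propto\pi(j,\gamma_{\text{+}}(\ell_{\bar{i}}))$ with a normalising constant depending only on $\gamma_{\text{+}}(\ell_{\bar{i}})$. First I would substitute the product form \eqref{eq:theta_joint_dis} of $\pi$ and observe that the factor $\prod_{i'\neq i}\eta_{i'}(\gamma_{\text{+}}(\ell_{i'}))$ does not depend on $j$, hence can be absorbed into the normaliser, leaving $\pi_{i}(j|\gamma_{\text{+}}(\ell_{\bar{i}}))\propto 1_{\Gamma_{\!\text{+}}}(\gamma_{\text{+}})\,\eta_{i}(j)$, where inside $1_{\Gamma_{\!\text{+}}}$ the map $\gamma_{\text{+}}$ has its $i$-th value set to $j$ and all other values held at $\gamma_{\text{+}}(\ell_{\bar{i}})$.

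The core of the argument is to resolve the indicator $1_{\Gamma_{\!\text{+}}}(\gamma_{\text{+}})$ as an explicit function of $j$. Here I would invoke the characterisation of $\Gamma_{\!\text{+}}$ from Section \ref{sss:obervation}: $\gamma_{\text{+}}\in\Gamma_{\!\text{+}}$ iff it is positive 1-1, i.e., no two distinct labels are mapped to the same strictly positive value. Since this constraint involves only the positive values, and since (being in the support of the conditional) the frozen sub-map $\gamma_{\text{+}}(\ell_{\bar{i}})$ is already positive 1-1, the full map is positive 1-1 in exactly two situations: (i) $j<1$, so the new value is non-positive and cannot collide with anything, whence $1_{\Gamma_{\!\text{+}}}(\gamma_{\text{+}})=1$; or (ii) $j\in\{1\mathnormal{:}M\}$ and $j$ is not among the values already used by $\gamma_{\text{+}}(\ell_{\bar{i}})$, i.e., $j\notin\{\gamma_{\text{+}}(\ell_{\bar{i}})\}$, which is precisely $1-1_{\{\gamma_{\text{+}}(\ell_{\bar{i}})\}}(j)=1$. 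Substituting these two cases into $1_{\Gamma_{\!\text{+}}}(\gamma_{\text{+}})\,\eta_{i}(j)$ reproduces exactly the unnormalised conditional $\widetilde{\pi}_{i}(j|\gamma_{\text{+}}(\ell_{\bar{i}}))$ of \eqref{eq:masking}.

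It then remains only to normalise. Since $\pi_{i}(\cdot|\gamma_{\text{+}}(\ell_{\bar{i}}))$ is a probability distribution on $\{\texttt{-}1\mathnormal{:}M\}$ proportional to $\widetilde{\pi}_{i}(\cdot|\gamma_{\text{+}}(\ell_{\bar{i}}))$, dividing by $\langle\widetilde{\pi}_{i}(\cdot|\gamma_{\text{+}}(\ell_{\bar{i}})),1\rangle=\sum_{j=\texttt{-}1}^{M}\widetilde{\pi}_{i}(j|\gamma_{\text{+}}(\ell_{\bar{i}}))$ yields \eqref{eq:i-th_conditional}. I would close by noting that this normaliser is strictly positive whenever $\gamma_{\text{+}}(\ell_{\bar{i}})$ lies in the support of the corresponding marginal of $\pi$ — exactly the regime in which the conditional is defined — so no degeneracy arises; equivalently, one may simply note that this is a term-by-term rewriting of \cite[Proposition 3]{vo2017efficient}.

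The only genuinely delicate point is case (ii): making precise that, for $j\geq1$, the event that $\gamma_{\text{+}}$ remains positive 1-1 after setting $\gamma_{\text{+}}(\ell_{i})=j$ coincides with $j\notin\{\gamma_{\text{+}}(\ell_{\bar{i}})\}$, which relies on the implicit standing assumption that the frozen sub-map is itself positive 1-1 (so that there is no pre-existing collision masking the dependence on $j$). Everything else is bookkeeping.
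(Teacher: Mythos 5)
Your proof is correct. The paper itself gives no proof of this proposition---it is restated from \cite[Proposition 3]{vo2017efficient}---and your derivation (freeze $\gamma_{\text{+}}(\ell_{\bar{i}})$, substitute the product form \eqref{eq:theta_joint_dis}, absorb the $j$-independent factors $\prod_{i'\neq i}\eta_{i'}(\gamma_{\text{+}}(\ell_{i'}))$ into the normaliser, and resolve the indicator $1_{\Gamma_{\text{+}}}(\gamma_{\text{+}})$ into the mask $1-1_{\{\gamma_{\text{+}}(\ell_{\bar{i}})\}}(j)$ for $j\geq 1$) is exactly the standard argument behind that result. You also correctly isolate the one genuinely delicate point, namely that the case split for $j\in\{1\mathnormal{:}M\}$ requires the frozen sub-map to be positive 1-1, which is precisely the standing assumption the paper makes implicitly in Remark~\ref{rem:mask_2}.
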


        \begin{rem}\label{rem:mask_1}
            The above result shows that the conditionals are completely characterized by the $P\times(M+2)$ cost matrix in Fig.~\ref{fig:matrix}(a) (which can be pre-computed from the measurement $Z_{\text{+}}$) and the values of $\gamma_{\text{+}}$ on $\ell_{\bar{i}}$, i.e., $\{\gamma_{\text{+}}(\ell_{\bar{i}})\}$. Specifically, the unnormalized $i$-th conditional is simply given by the $i$-th row after entries with (positive) indices contained in $\{\gamma_{\text{+}}(\ell_{\bar{i}})\}$ have been zeroed (or masked) out, as illustrated in Fig.~\ref{fig:matrix}(b). Since evaluating $1_{\{\gamma_{\text{+}}(\ell_{\bar{i}})\}}(j)$ (and hence the mask $1-1_{\{\gamma_{\text{+}}(\ell_{\bar{i}})\}}(j)$) for each $j$ incurs an $\mathcal{O}(P)$ complexity, computing $\pi_{i}(\cdot|\gamma_{\text{+}}(\ell_{\bar{i}}))$ requires an $\mathcal{O}(PM)$ complexity.
        \end{rem}

        \begin{figure}[t!]
            \vspace{-0.1cm}
            \centering
            \includegraphics[width=8.5cm]{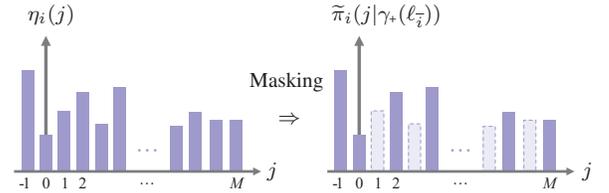}
            \caption{Computing the unnormalized $i$-th conditional. Multiplying the $i$-th row of the matrix in (a) with a masking function (that depends on $\gamma_{\text{+}}$) and normalizing the resulting function in (b) yields the $i$-th conditional.}
            \label{fig:matrix}
        \end{figure}

        \begin{rem}\label{rem:mask_2}
            If $\gamma_{\text{+}}$ is positive 1-1 on $\{\ell_{\bar{i}}\}$ and $\gamma_{\text{+}}(\ell_{i})$ is set to any $j\sim\pi_{i}(\cdot|\gamma_{\text{+}}(\ell_{\bar{i}}))$, then $\gamma_{\text{+}}$ is also positive 1-1, i.e., a valid association map. Multiplication by the mask $1-1_{\{\gamma_{\text{+}}(\ell_{\bar{i}})\}}(j)$ ensures that any $j$ violating the positive 1-1 condition has zero probability of being sampled.
        \end{rem}

        Selecting significant GLMB components can be performed with $\mathcal{O}(TP^{2}M)$ complexity via SGS as shown in \cite{vo2017efficient}. Noting that $P$ (the number of hypothesized objects) is strongly correlated with $M$ (the number of measurements), this complexity translates roughly to a cubic complexity, i.e., $\mathcal{O}(TP^{3})$ or $\mathcal{O}(TM^{3})$. Nonetheless, SGS has been extended to address multi-dimensional ranked assignment problems in multi-scan and multi-sensor GLMB filtering \cite{vo2019multiscan, vo2019multisensor, moratuwage2022multi}.

\section{Linear Complexity GS for GLMB Filtering}\label{s:linear_glmb}
This section presents efficient linear complexity GS for selecting significant components in GLMB filtering. We begin with the widely-known random-scan GS (RGS) in Subsection~\ref{ss:rgs_glmb}. Subsection~\ref{ss:tgs} then presents tempered GS (TGS), a recent generalization that can overcome the drawbacks of RGS, but incurs an $\mathcal{O}(TP^{2}M)$ complexity. In Subsection~\ref{ss:tgs_glmb}, we develop a decomposition of the conditionals (of the stationary distribution) allowing TGS to be implemented with a linear complexity of $\mathcal{O}(T(P+M))$. Salient special cases of the proposed linear complexity TGS, including deterministic-scan GS (DGS), are discussed in Subsection~\ref{ss:special_cases}. For completeness, the linear complexity TGS-based GLMB filter implementation is discussed in Subsection~\ref{ss:filter_imple}.

\subsection{RGS for GLMB Truncation}\label{ss:rgs_glmb}
    Whereas SGS generates the next iterate $\gamma'_{\text{+}}$ by traversing and updating all $P$ coordinates of the current iterate $\gamma_{\text{+}}$, RGS only selects one coordinate at random to update\cite{roberts1998convergence, bryan2016scanorder}. This means the transition kernel $\pi(\gamma'_{\text{+}}|\gamma_{\text{+}})$ is given by $\frac{1}{P}\pi_{i}(\gamma'_{\text{+}}(\ell_{i})|\gamma_{\text{+}}(\ell_{\bar{i}}))$ when $\gamma_{\text{+}}$ and $\gamma'_{\text{+}}$ differ at most in the $i$-th coordinate and $0$ otherwise. A generic RGS implementation would incur an $\mathcal{O}(TPM)$ complexity because computing the conditionals requires $\mathcal{O}(PM)$. Further, RGS is inefficient in the sense that it generates less distinct significant samples than SGS, for the same number of iterates of the chain, leading to poorer GLMB approximations and tracking performance.
    
    The two main factors affecting the efficiency of RGS are \textit{mixing time} and \textit{sample diversity}. Intuitively, mixing time is the number of iterations required for subsequent states to be treated as samples from the stationary distribution $\pi$. Sample diversity refers to the proportion of distinct samples in a given number iterates of the chain. While the actual distribution of the samples is not relevant for GLMB truncation, fast mixing is necessary for efficient generation of distinct significant samples. Furthermore, even if the chain converges to the stationary distribution, sample diversity can be poor due to frequent revisiting of previous states from successive iterations. RGS's notorious slow-mixing\cite{belisle1998slow, roberts1998convergence, zanella2019scalable}, together with observed poor sample diversity means that it could take many iterations for significant GLMB components to be generated. 

\subsection{Tempered Gibbs Sampling}\label{ss:tgs}
    Similar to RGS, TGS also generates the next iterate $\gamma'_{\text{+}}$ by randomly selecting a coordinate to update. However, TGS provides an additional mechanism to improve mixing and sample diversity\cite{zanella2019scalable}. Specifically, a coordinate $i\in\{1\mathnormal{:}P\}$ is chosen according to the distribution
    \begin{equation}\label{eq:selection_probability}
        \rho(i|\gamma_{\text{+}})\propto\frac{\phi_{i}(\gamma_{\text{+}}(\ell_{i})|\gamma_{\text{+}}(\ell_{\bar{i}}))}{\pi_{i}(\gamma_{\text{+}}(\ell_{i})|\gamma_{\text{+}}(\ell_{\bar{i}}))},
    \end{equation}
    where $\phi_{i}(\cdot|\gamma_{\text{+}}(\ell_{\bar{i}}))$ is a bounded proposal, defined on $\{\texttt{-}1\text{:}M\}$, with the same support as $\pi_{i}(\cdot|\gamma(\ell_{\bar{i}}))$. Further, given the selection of the $i$-th coordinate, its state is updated by sampling from the proposal, i.e.,
    \begin{equation}\label{eq:next_gamma_TGS}
        \gamma'_{\text{+}}(\ell_{i})\sim\phi_{i}(\cdot|\gamma_{\text{+}}(\ell_{\bar{i}})),
    \end{equation}
    (note that TGS reduces to RGS in the special case $\phi_{i} = \pi_{i}$). The proposal $\phi_{i}$ controls sample diversity, and determines coordinate selection that can influence mixing\cite{roberts1997updating, diaconis2008gibbs, bryan2016scanorder}. The transition kernel of TGS is given by $\phi(\gamma'_{\text{+}}|\gamma_{\text{+}})\propto\rho(i)\phi_{i}(\gamma'_{\text{+}}(\ell_{i})|\gamma_{\text{+}}(\ell_{\bar{i}}))$ when $\gamma_{\text{+}}$ and $\gamma'_{\text{+}}$ differ at most in the $i$-th coordinate and $0$ otherwise.

    In GLMB truncation, we are not interested in the importance weights since the goal is to generate distinct samples with significant GLMB weights (which are different from the importance weights). While TGS can circumvent the drawbacks of RGS by using fast mixing proposals that yield diverse samples, these may not be significant (in GLMB weights) because the tempered stationary distribution could be very different from $\pi$. One way to generate diverse and significant samples is to use proposals that approximate the conditional $\pi_{i}$, but are more diffuse, which can be achieved with a mixture consisting of the conditional and its tempered version, i.e.,
    \begin{equation}\label{eq:mixture_proposal}
        \phi_{i}(j|\gamma_{\text{+}}(\ell_{\bar{i}})) = \alpha\pi_{i}(j|\gamma_{\text{+}}(\ell_{\bar{i}})) + \frac{(1-\alpha)\pi_{i}^{\beta}(j|\gamma_{\text{+}}(\ell_{\bar{i}}))}{\left\langle \pi_{i}^{\beta}(\cdot|\gamma_{\text{+}}(\ell_{\bar{i}})),1\right\rangle },
    \end{equation}
    where $\alpha,\beta\in(0,1]$, and for any function $f, f^{\beta}(\cdot)\triangleq\left[f(\cdot)\right]^{\beta}$. This popular proposal preserves the modes of the conditional $\pi_{i}(\cdot|\gamma_{\texttt{+}}(\ell_{\bar{i}}))$ to capture significant samples, and at the same time, increases sample diversity via the more diffuse tempered term \cite{gramacy2010importance, zanella2019scalable}. Moreover, the state informed coordinate selection strategy of TGS provides faster mixing \cite{zanella2019scalable, griffin2021search, zhou2022rapid}.

    \begin{algorithm}[t!]
        \caption{Tempered Gibbs Sampling\cite{zanella2019scalable}}
        \label{algo:TGS_general}

        \Input{$\gamma^{}_{\text{+}}$, $\beta$, $[\eta_i(j)]^{P}_{i\texttt{=}1}$}
        \Output{$\gamma'_{\text{+}}$}
        \vspace{0.1cm}\hrule\vspace{0.1cm}
        
        \textsf{\small Compute} $\rho(\cdot|\gamma^{}_{\text{+}})$\;

        \textsf{\small Sample} $n$ from $\rho(\cdot|\gamma^{}_{\text{+}})$\;

        \textsf{\small Sample} $\gamma'_{\text{+}}(\ell_{n})$ from $\phi_{n}(\cdot|\gamma^{}_{\text{+}}(\ell_{\bar{n}}))$\;
    \end{algorithm}

    While the TGS kernel (Algorithm~\ref{algo:TGS_general}) avoids traversing all coordinates, it still incurs an $\mathcal{O}(P^{2}M)$ complexity (and hence generating $T$ iterates incurs $\mathcal{O}(TP^{2}M)$ complexity) because:
    \begin{itemize}
        \item Computing the conditionals $\pi_{i}(\cdot|\gamma_{\text{+}}(\ell_{\bar{i}}))$ for all $i\in\{1\mathnormal{:}P\}$ incurs $\mathcal{O}(P^{2}M)$ complexity since each conditional requires performing the positive 1-1 checks using the set inclusion $1_{\{\gamma_{\text{+}}(\ell_{\bar{i}})\}}(j)$, which incurs an $\mathcal{O}(PM)$ complexity;
        
        \item Computing the proposals $\phi_{i}(\cdot|\gamma_{\text{+}}(\ell_{\bar{i}}))$ for all $i\in\{1\mathnormal{:}P\}$ incurs $\mathcal{O}(PM)$ complexity due to $\beta$-th power operations and normalizations; and
        
        \item Computing the coordinate selection probabilities $\rho(i|\gamma_{\text{+}})$ for all $i\in\{1\mathnormal{:}P\}$ and sampling the coordinate incurs $\mathcal{O}(P)$ complexity, while sampling from $\phi_{n}(\cdot|\gamma_{\text{+}}(\ell_{\bar{n}}))$ incurs $\mathcal{O}(M)$ complexity.
    \end{itemize}
    Nonetheless, it is possible to further exploit the particular structure of the problem through the positive 1-1 constraint to implement this kernel with $\mathcal{O}(P+M)$ complexity. 

\subsection{Linear Complexity TGS}\label{ss:tgs_glmb}
    To reduce the complexity of computing and sampling from the coordinate distribution $\rho(\cdot|\gamma_{\text{+}})$, note that the proposal \eqref{eq:mixture_proposal} can be rewritten in terms of the unnormalized conditional as
    \begin{equation}
        \phi_{i}(j|\gamma_{\text{+}}(\ell_{\bar{i}})) = \frac{\alpha\widetilde{\pi}_{i}(j|\gamma_{\text{+}}(\ell_{\bar{i}}))}{\nu_{i}^{(1)}(\gamma_{\text{+}})} + \frac{(1-\alpha)\widetilde{\pi}_{i}^{\beta}(j|\gamma_{\text{+}}(\ell_{\bar{i}}))}{\nu_{i}^{(\beta)}(\gamma_{\text{+}})},	
    \end{equation}
    where $\nu_{i}^{(\beta)}(\gamma_{\text{+}}) = \left\langle \widetilde{\pi}_{i}^{\beta}(\cdot|\gamma_{\text{+}}(\ell_{\bar{i}})),1\right\rangle$ is the normalizing constant for $\widetilde{\pi}_{i}^{\beta}(\cdot|\gamma_{\text{+}}(\ell_{\bar{i}}))$ (hence $\nu_{i}^{(1)}(\gamma_{\text{+}})$ is the normalizing constant for $\widetilde{\pi}_{i}(\cdot|\gamma_{\text{+}}(\ell_{\bar{i}}))$). Further, the unnormalized conditional $\widetilde{\pi}_{i}(\cdot|\gamma'_{\text{+}}(\ell_{\bar{i}}))$ at the next GS iteration differs from $\widetilde{\pi}_{i}(\cdot|\gamma_{\text{+}}(\ell_{\bar{i}}))$ at no more than two points on its domain $\{\texttt{-}1\mathnormal{:}M\}$. Fig.~\ref{fig:example} provides an illustration.

    \begin{figure}[t!]
        \centering
            \includegraphics[width=8.8cm]{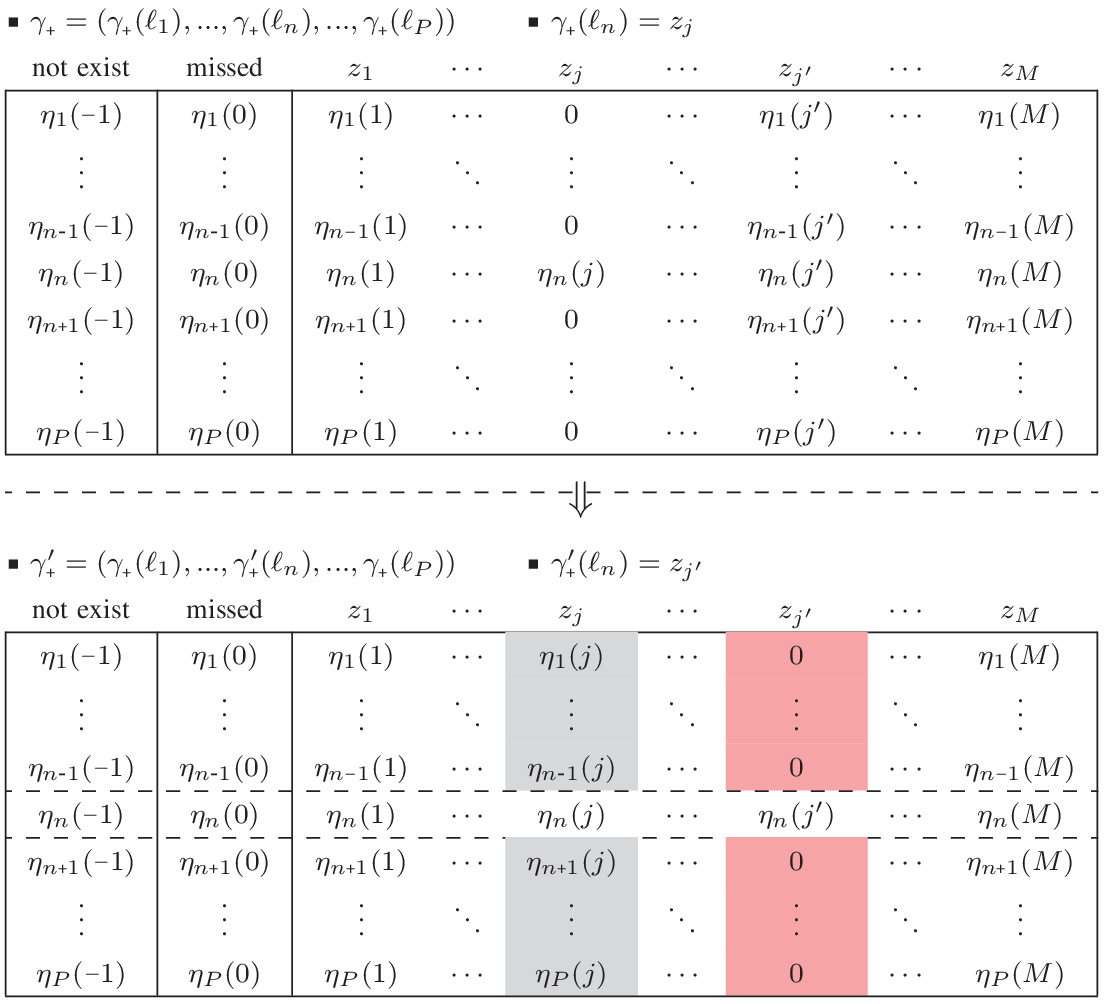}
        \caption{Difference between unnormalized conditionals in successive TGS iterations. If $\gamma'_{\text{+}}$ differs from $\gamma_{\text{+}}$ only at the $n$-th coordinate, i.e., $\gamma_{\text{+}}(\ell_{n})=j>0$ and $\gamma'_{\text{+}}(\ell_{n})=j'>0$, with $j\neq j'$, then for $i\in\{\bar{n}\}$, $\widetilde{\pi}_{i}(\cdot|\gamma'_{\text{+}}(\ell_{\bar{i}}))$ is the same as $\widetilde{\pi}_{i}(\cdot|\gamma_{\text{+}}(\ell_{\bar{i}}))$ on $\{\texttt{-}1\mathnormal{:}M\}$ except at $j$ (\textcolor{gray}{gray}) and $j'$ (\textcolor{red}{red}).}
        \label{fig:example}
    \end{figure}

    \begin{propo}\label{prop:difference}
        Suppose that, $\gamma'_{\text{+}}$ and $\gamma_{\text{+}}$ differ only at the $n$-th coordinate, i.e., $\gamma'_{\text{+}}(\ell_{\bar{n}})=\gamma_{\text{+}}(\ell_{\bar{n}})$ and $\gamma'_{\text{+}}(\ell_{n})\neq\gamma_{\text{+}}(\ell_{n})$. Then, for $i\in\{1\emph{:}P\}$, $j\in\{\texttt{-}1\emph{:}M\}$, and $\beta>0$,
        \begin{align*}
            \hspace{-0.1cm}\widetilde{\pi}_{i}^{\beta}(j|\gamma'_{\text{+}}(\ell_{\bar{i}}))-\widetilde{\pi}_{i}^{\beta}(j|\gamma_{\text{+}}(\ell_{\bar{i}})) &= \\
            & \hspace{-3cm} 1_{\{\bar{n}\}}(i)1_{\{1:M\}}(j)\eta_{i}^{\beta}(j)	\left(\delta_{\gamma_{\text{+}}(\ell_{n})}[j]-\delta_{\gamma'_{\text{+}}(\ell_{n})}[j]\right).
        \end{align*}
    \end{propo}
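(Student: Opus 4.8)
The plan is to substitute the explicit form of $\widetilde{\pi}_{i}$ from Proposition~\ref{prop:conditional} and then bookkeep, entry by entry, how the mask reacts to a single‑coordinate change in $\gamma_{\text{+}}$. Since $\widetilde{\pi}_{i}^{\beta}(j|\cdot)=[\widetilde{\pi}_{i}(j|\cdot)]^{\beta}$, I would first dispose of the range $j\in\{\texttt{-}1\text{:}0\}$: there \eqref{eq:masking} gives $\widetilde{\pi}_{i}^{\beta}(j|\gamma_{\text{+}}(\ell_{\bar i}))=\eta_{i}^{\beta}(j)$ regardless of the conditioning argument, so the left‑hand side vanishes, and so does the right‑hand side because $1_{\{1:M\}}(j)=0$. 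This both settles that case and shows it suffices to establish the identity on $j\in\{1\text{:}M\}$ and then reinstate the factor $1_{\{1:M\}}(j)$.

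For $j\in\{1\text{:}M\}$, note that $1-1_{\{\gamma_{\text{+}}(\ell_{\bar i})\}}(j)\in\{0,1\}$, so for every $\beta>0$ its $\beta$‑th power equals itself; hence $\widetilde{\pi}_{i}^{\beta}(j|\gamma_{\text{+}}(\ell_{\bar i}))=\eta_{i}^{\beta}(j)\big(1-1_{\{\gamma_{\text{+}}(\ell_{\bar i})\}}(j)\big)$, and likewise for $\gamma'_{\text{+}}$. Subtracting, the left‑hand side of the claim becomes $\eta_{i}^{\beta}(j)\big(1_{\{\gamma_{\text{+}}(\ell_{\bar i})\}}(j)-1_{\{\gamma'_{\text{+}}(\ell_{\bar i})\}}(j)\big)$, so everything reduces to the purely combinatorial identity
\[
1_{\{\gamma_{\text{+}}(\ell_{\bar i})\}}(j)-1_{\{\gamma'_{\text{+}}(\ell_{\bar i})\}}(j)=1_{\{\bar n\}}(i)\big(\delta_{\gamma_{\text{+}}(\ell_{n})}[j]-\delta_{\gamma'_{\text{+}}(\ell_{n})}[j]\big),\qquad j\in\{1\text{:}M\}.
\]

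To prove this I would split on whether $i=n$. If $i=n$, then $\ell_{\bar i}=\ell_{\bar n}$ and the hypothesis $\gamma'_{\text{+}}(\ell_{\bar n})=\gamma_{\text{+}}(\ell_{\bar n})$ makes the two indicators coincide, so the left side is $0$; the right side is $0$ as well since $1_{\{\bar n\}}(n)=0$. If $i\neq n$, then $n$ is one of the indices in $\bar i$ and $\gamma_{\text{+}},\gamma'_{\text{+}}$ agree at every $\ell_{k}$ with $k\in\bar i$ except possibly at $k=n$. I would then check the subcases $j=\gamma_{\text{+}}(\ell_{n})$, $j=\gamma'_{\text{+}}(\ell_{n})$ (these cannot hold together), and $j\notin\{\gamma_{\text{+}}(\ell_{n}),\gamma'_{\text{+}}(\ell_{n})\}$, using the positive 1-1 property of the iterates: in the first subcase $j\in\{\gamma_{\text{+}}(\ell_{\bar i})\}$ via coordinate $n$ and, by positive 1-1 of $\gamma_{\text{+}}$, via no other, whereas $j\notin\{\gamma'_{\text{+}}(\ell_{\bar i})\}$ (the $k\neq n$ coordinates agree with $\gamma_{\text{+}}$ and thus miss $j$, while $\gamma'_{\text{+}}(\ell_{n})\neq j$), so the difference is $+1=\delta_{\gamma_{\text{+}}(\ell_{n})}[j]-\delta_{\gamma'_{\text{+}}(\ell_{n})}[j]$; the second subcase is symmetric with difference $-1$; and in the third subcase membership of $j$ in either set is decided solely by the common coordinates $\{k:k\neq i,n\}$, hence the difference is $0$, again matching the right side. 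Collecting the cases and restoring $\eta_{i}^{\beta}(j)$ and $1_{\{1:M\}}(j)$ gives the stated formula.

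The main obstacle is the bookkeeping in the $i\neq n$ subcases: one must verify that exactly one of the two set‑membership indicators flips and that it flips by exactly $\pm1$, which is precisely where the positive 1-1 structure of the association maps (guaranteed for every TGS iterate by Remark~\ref{rem:mask_2}) is indispensable — without it a single value could be hit by several coordinates and the indicator could fail to move even though the corresponding Kronecker term does, breaking the identity.
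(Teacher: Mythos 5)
Your proposal is correct and follows essentially the same route as the paper's proof: handle $j\le 0$ and $i=n$ trivially, then for $i\neq n$ and $j>0$ use the positive 1-1 property of the iterates to show the mask flips exactly at $\gamma_{\text{+}}(\ell_{n})$ and $\gamma'_{\text{+}}(\ell_{n})$ and nowhere else, before restoring the $\beta$-th power. Your reduction to a clean indicator identity (using that the $\{0,1\}$-valued mask is invariant under $\beta$-th powers) is a tidy repackaging of the paper's case analysis (a)--(c), not a different argument.
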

    \begin{proof}
        See Appendix (Subsection \ref{ss:proof-of-proposition-2}).
    \end{proof}

    \begin{rem}
        Given successive TGS iterates $\gamma_{\text{+}}$ and $\gamma'_{\text{+}}$ that differ at the $n$-th coordinate, Proposition~\ref{prop:difference} (with $\beta=1$) states that their $n$-th conditionals are the same, i.e., $\pi_{n}(\cdot|\gamma_{\text{+}}(\ell_{\bar{n}}))=\pi_{n}(\cdot|\gamma'_{\text{+}}(\ell_{\bar{n}}))$. Moreover, for $i\neq n$, the unnormalized $i$-th conditional $\widetilde{\pi}_{i}(\cdot|\gamma'_{\text{+}}(\ell_{\bar{i}}))$ is the same as $\widetilde{\pi}_{i}(\cdot|\gamma_{\text{+}}(\ell_{\bar{i}}))$, except at:
        \begin{enumerate}[(i)]
            \item $j\!=\gamma_{\text{+}}(\ell_{n})>0$, where $\eta_{i}(j)$ is added to $\widetilde{\pi}_{i}(j|\gamma_{\text{+}}(\ell_{\bar{i}}))$ so that $\widetilde{\pi}_{i}(j|\gamma'_{\text{+}}(\ell_{\bar{i}}))=\eta_{i}(j)$ (because this $j$ is now freed up); and
            
            \item $j\!=\gamma'_{\text{+}}(\ell_{n})>0$, where $-\eta_{i}(j)$ is added to $\widetilde{\pi}_{i}(j|\gamma_{\text{+}}(\ell_{\bar{i}}))$ so that $\widetilde{\pi}_{i}(j|\gamma'_{\text{+}}(\ell_{\bar{i}}))=0$ (because this $j$ is now taken).
        \end{enumerate}
        This also means the normalizing constant for $\widetilde{\pi}_{i}(\cdot|\gamma'_{\text{+}}(\ell_{\bar{i}}))$ can be computed from that of $\widetilde{\pi}_{i}(\cdot|\gamma_{\text{+}}(\ell_{\bar{i}}))$ by simply adding $\eta_{i}(\gamma_{\text{+}}(\ell_{n}))$ when $\gamma_{\text{+}}(\ell_{n})>0$, and subtracting $\eta_{i}(\gamma'_{\text{+}}(\ell_{n}))$ when $\gamma'_{\text{+}}(\ell_{n})>0$. Consequently, the unnormalized conditional $\widetilde{\pi}_{i}(\cdot|\cdot)$ and its normalizing constant can be propagated to the next given TGS iteration with at most two additions.
    \end{rem}

    The above discussion also holds for the tempered unnormalized conditional $\widetilde{\pi}_{i}^{\beta}(\cdot|\cdot)$, and is stated more concisely in the following.
    \begin{cor}\label{cor:decomposition}
        Suppose that the successive TGS iterates $\gamma_{\text{+}}$ and $\gamma'_{\text{+}}$ differ at the $n$-th coordinate. Then, for $\beta>0$,
        \begin{align*}
            \widetilde{\pi}_{n}^{\beta}(\cdot|\gamma'_{\text{+}}(\ell_{\bar{n}}))	&=\widetilde{\pi}_{n}^{\beta}(\cdot|\gamma_{\text{+}}(\ell_{\bar{n}})),\\
            \left\langle \widetilde{\pi}_{n}^{\beta}(\cdot|\gamma'_{\text{+}}(\ell_{\bar{n}})),1\right\rangle &=   \left\langle \widetilde{\pi}_{n}^{\beta}(\cdot|\gamma_{\text{+}}(\ell_{\bar{n}})),1\right\rangle,
        \end{align*}
        and for $i\in\{\bar{n}\}, j\in\{\texttt{-}1\emph{:}M\}$,
        \begin{align*}
            \widetilde{\pi}_{i}^{\beta}(j|\gamma'_{\text{+}}(\ell_{\bar{i}}))	
            &= \begin{cases}
                    \eta_{i}^{\beta}(j), & j=\gamma_{\text{+}}(\ell_{n})>0\\
                    0, & j=\gamma'_{\text{+}}(\ell_{n})>0\\
                    \widetilde{\pi}_{i}^{\beta}(j|\gamma_{\text{+}}(\ell_{\bar{i}})), & \text{otherwise}
                \end{cases}, \\
            \left\langle \widetilde{\pi}_{i}^{\beta}(\cdot|\gamma'_{\text{+}}(\ell_{\bar{i}})),1\right\rangle &= \left\langle \widetilde{\pi}_{i}^{\beta}(\cdot|\gamma{}_{\text{+}}(\ell_{\bar{i}})),1\right\rangle \\
            &\hspace{0.5cm} + \eta_{i}^{\beta}(\gamma_{\text{+}}(\ell_{n}))1_{\{1:M\}}(\gamma_{\text{+}}(\ell_{n})) \\
    		&\hspace{0.5cm} - \eta_{i}^{\beta}(\gamma'_{\text{+}}(\ell_{n}))1_{\{1:M\}}(\gamma'_{\text{+}}(\ell_{n})).
        \end{align*}
    \end{cor}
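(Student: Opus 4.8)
The plan is to read the corollary off Proposition~\ref{prop:difference}, handling the changed coordinate $n$ and the unchanged coordinates $i\in\{\bar n\}$ separately, and then obtaining the normalizing-constant statements by taking $\langle\,\cdot\,,1\rangle$ of the resulting pointwise identities. Throughout I use that the successive TGS iterates $\gamma_{\text{+}},\gamma'_{\text{+}}$ are valid (positive $1$-$1$) association maps, so the masking identity \eqref{eq:masking} may be invoked freely.

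\emph{The $n$-th conditional.} The $n$-th (tempered, unnormalized) conditional depends on $\gamma_{\text{+}}$ only through $\gamma_{\text{+}}(\ell_{\bar n})$, and by hypothesis $\gamma'_{\text{+}}(\ell_{\bar n})=\gamma_{\text{+}}(\ell_{\bar n})$; equivalently, the factor $1_{\{\bar n\}}(i)$ in Proposition~\ref{prop:difference} vanishes at $i=n$. Hence $\widetilde{\pi}_{n}^{\beta}(\cdot|\gamma'_{\text{+}}(\ell_{\bar n}))=\widetilde{\pi}_{n}^{\beta}(\cdot|\gamma_{\text{+}}(\ell_{\bar n}))$ pointwise on $\{\texttt{-}1\mathnormal{:}M\}$, and summing over $j$ (i.e.\ taking $\langle\,\cdot\,,1\rangle$) gives equality of the corresponding normalizing constants.

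\emph{The coordinates $i\in\{\bar n\}$.} Fix such an $i$, so $i\neq n$ and hence $n\in\{\bar i\}$. With $1_{\{\bar n\}}(i)=1$, Proposition~\ref{prop:difference} reads
\begin{equation*}
\widetilde{\pi}_{i}^{\beta}(j|\gamma'_{\text{+}}(\ell_{\bar i}))=\widetilde{\pi}_{i}^{\beta}(j|\gamma_{\text{+}}(\ell_{\bar i}))+1_{\{1:M\}}(j)\,\eta_{i}^{\beta}(j)\big(\delta_{\gamma_{\text{+}}(\ell_{n})}[j]-\delta_{\gamma'_{\text{+}}(\ell_{n})}[j]\big),
\end{equation*}
from which I would read the three cases. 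If $j=\gamma_{\text{+}}(\ell_{n})>0$: since $n\in\{\bar i\}$ we have $j\in\{\gamma_{\text{+}}(\ell_{\bar i})\}$, so \eqref{eq:masking} forces $\widetilde{\pi}_{i}^{\beta}(j|\gamma_{\text{+}}(\ell_{\bar i}))=0$, while $\delta_{\gamma'_{\text{+}}(\ell_{n})}[j]=0$ because $\gamma'_{\text{+}}(\ell_{n})\neq\gamma_{\text{+}}(\ell_{n})$, leaving $\widetilde{\pi}_{i}^{\beta}(j|\gamma'_{\text{+}}(\ell_{\bar i}))=\eta_{i}^{\beta}(j)$ (``$j$ is now freed up''). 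If $j=\gamma'_{\text{+}}(\ell_{n})>0$: then $j\in\{\gamma'_{\text{+}}(\ell_{\bar i})\}$, so \eqref{eq:masking} gives $\widetilde{\pi}_{i}^{\beta}(j|\gamma'_{\text{+}}(\ell_{\bar i}))=0$ directly (``$j$ is now taken''); consistently, positive $1$-$1$ of $\gamma'_{\text{+}}$ yields $j\notin\{\gamma_{\text{+}}(\ell_{\bar i})\}$, hence $\widetilde{\pi}_{i}^{\beta}(j|\gamma_{\text{+}}(\ell_{\bar i}))=\eta_{i}^{\beta}(j)$, which cancels the subtracted term. Otherwise the bracketed term is $0$ and $\widetilde{\pi}_{i}^{\beta}(j|\gamma'_{\text{+}}(\ell_{\bar i}))=\widetilde{\pi}_{i}^{\beta}(j|\gamma_{\text{+}}(\ell_{\bar i}))$. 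Summing the displayed identity over $j\in\{\texttt{-}1\mathnormal{:}M\}$ and using $\sum_{j}1_{\{1:M\}}(j)\eta_{i}^{\beta}(j)\delta_{a}[j]=\eta_{i}^{\beta}(a)1_{\{1:M\}}(a)$ for $a=\gamma_{\text{+}}(\ell_{n})$ and $a=\gamma'_{\text{+}}(\ell_{n})$ then gives the stated normalizing-constant update.

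I expect no real obstacle: everything is bookkeeping on top of Proposition~\ref{prop:difference}. The only point needing care is the appeal to \eqref{eq:masking} (together with the positive $1$-$1$ property of the iterates) to pin down the actual values $0$ and $\eta_{i}^{\beta}(\cdot)$ at the two affected points $j=\gamma_{\text{+}}(\ell_{n})$ and $j=\gamma'_{\text{+}}(\ell_{n})$; and the normalizing-constant claims are simply $\langle\,\cdot\,,1\rangle$ applied to the pointwise identities already established.
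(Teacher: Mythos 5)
Your proof is correct and follows essentially the same route as the paper, which derives the corollary directly from Proposition~\ref{prop:difference} together with the case analysis in the preceding remark (the ``freed up'' value $j=\gamma_{\text{+}}(\ell_{n})$ and the ``taken'' value $j=\gamma'_{\text{+}}(\ell_{n})$), using the masking form \eqref{eq:masking} and the positive $1$-$1$ property of the iterates to pin down the values $\eta_{i}^{\beta}(j)$ and $0$ at the two affected points. The normalizing-constant updates by summing the pointwise identity over $j$ are likewise exactly the paper's bookkeeping.
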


    \begin{algorithm}[t!]
        \caption{{\small$TGS^+$}}
        \label{algo:TGS_main}
    
        \Input{$\gamma^{(0)}_{\text{+}}$, $T$, $\alpha$, $\beta$, $[\eta_i(j)]^{P}_{i\text{=}1}$, $\rho(\cdot|\gamma^{(0)}_{\text{+}})$, $[\widetilde{\pi}_i(\cdot|\gamma^{(0)}_{\text{+}}\!(\ell_{\bar{i}}))]^{P}_{i\text{=}1}$, $[\nu^{(1)}_i\!(\gamma^{(0)}_{\text{+}})]^{P}_{i\text{=}1}$, $[\nu^{(\beta)}_i\!(\gamma^{(0)}_{\text{+}})]^{P}_{i\text{=}1}$}
        \Output{$[\gamma^{(t)}_{\text{+}}]^{T}_{t\text{=}1}$}
        
        \vspace{0.1cm}\hrule\vspace{0.1cm}
        \SetInd{0.4em}{0.7em}

        $M \leftarrow \textsf{\small size}([\eta_i(j)]^{P}_{i\text{=}1}, \textsf{\footnotesize col})\texttt{-}2$\;
    
        \ForEach{$t = 1:T$}{
            $n \sim \textsf{\small Categorical} ([1\mathnormal{:}P], \rho(\cdot|\gamma^{(t\text{-}1)}_{\text{+}}))$\;\label{line:tgs_cate_sampling}
            
            $\phi_n(\cdot|\gamma^{(t\text{-}1)}_{\text{+}}\!(\ell_{\bar{n}}))  \leftarrow \frac{\alpha\widetilde{\pi}_n(\cdot|\gamma^{(t\text{-}1)}_{\text{+}}\!(\ell_{\bar{n}}))}{\nu^{(1)}_n\!(\gamma^{(t\text{-}1)}_{\text{+}})} +  \frac{(1\text{-}\alpha)\widetilde{\pi}^{\beta}_n\!(\cdot|\gamma^{(t\text{-}1)}_{\text{+}}\!(\ell_{\bar{n}}))}{\nu^{(\beta)}_n\!(\gamma^{(t\text{-}1)}_{\text{+}})}$\;\label{line:tgs_comp_proposal}
            
            $\gamma^{(t)}_{\text{+}}\!(\ell_n) \sim \textsf{\small Categorical} ([\texttt{-}1\mathnormal{:}M], \phi_n(\cdot|\gamma^{(t\text{-}1)}_{\text{+}}\!(\ell_{\bar{n}})))$\;\label{line:tgs_sol_sample}
            
            $\gamma^{(t)}_{\text{+}} \leftarrow [\gamma^{(t\text{-}1)}_{\text{+}}\!(\ell_{1:n\text{-}1}),\gamma^{(t)}_{\text{+}}\!(\ell_n),\gamma^{(t\text{-}1)}_{\text{+}}\!(\ell_{n\text{+}1:P})]$\;
    
            \ForEach{$i = 1:P$}{\label{line:tgs_sel_comp_begin}
                \textsf{\small Compute} $\widetilde{\pi}_i(\cdot|\gamma^{(t)}_{\text{+}}\!(\ell_{\bar{i}})), \nu^{(1)}_i\!(\gamma^{(t)}_{\text{+}}), \nu^{(\beta)}_i\!(\gamma^{(t)}_{\text{+}})$ \phantom{+++++++++++++++++++++++++++} via Corollary~\ref{cor:decomposition}\;
                
                $\pi_i(\gamma_{\text{+}}^{(t)}\!(\ell_i) | \gamma^{(t)}_{\text{+}}\!(\ell_{\bar{i}})) \leftarrow \frac{\widetilde{\pi}_i(\gamma_{\text{+}}^{(t)}\!(\ell_i) | \gamma^{(t)}_{\text{+}}\!(\ell_{\bar{i}}))}{\nu^{(1)}_{i}\!(\gamma^{(t)}_{\text{+}})}$\;
    
                $\phi_i(\gamma_{\text{+}}^{(t)}\!(\ell_i)|\gamma^{(t)}_{\text{+}}\!(\ell_{\bar{i}}))  \leftarrow \alpha\pi_i(\gamma_{\text{+}}^{(t)}\!(\ell_i)|\gamma^{(t)}_{\text{+}}\!(\ell_{\bar{i}}))$ $\phantom{++++++++++++} + \frac{(1\text{-}\alpha) \widetilde{\pi}^{\beta}_i\!(\gamma_{\text{+}}^{(t)}\!(\ell_i) | \gamma^{(t)}_{\text{+}}\!(\ell_{\bar{i}}))}{\nu^{(\beta)}_{i}\!(\gamma^{(t)}_{\text{+}})}$\;
    
                $\widetilde{\rho}(i|\gamma^{(t)}_{\text{+}}) \leftarrow \frac{\phi_i(\gamma_{\text{+}}^{(t)}\!(\ell_i) | \gamma^{(t)}_{\text{+}}\!(\ell_{\bar{i}}))}{\pi_i(\gamma_{\text{+}}^{(t)}\!(\ell_i) | \gamma^{(t)}_{\text{+}}\!(\ell_{\bar{i}}))}$\;\label{line:tgs_sel_comp_end}
            }
            
            $\rho(\cdot|\gamma^{(t)}_{\text{+}}) \leftarrow \frac{\widetilde{\rho}(\cdot|\gamma^{(t)}_{\text{+}})}{\langle \widetilde{\rho}(\cdot|\gamma^{(t)}_{\text{+}}), 1 \rangle}$\;\label{line:tgs_sel_normal}
        }
    \end{algorithm}

    The above result means propagating each tempered conditional and its normalizing constant to the next iterate of the Markov chain can be performed with a constant time complexity. Consequently, $\mathcal{O}(T(P+M))$ complexity TGS can be developed for selecting significant GLMB components. The pseudocode in Algorithm~\ref{algo:TGS_main}, herein referred to as $TGS^{+}$, outlines the steps for generating iterates $\gamma_{\text{+}}^{(1)}, \cdots, \gamma_{\text{+}}^{(T)}$, from initial chain state $\gamma_{\text{+}}^{(0)}$, and initial coordinate distribution $\rho(\cdot|\gamma_{\text{+}}^{(0)})$. Note that:
    \begin{itemize}
        \item Sampling $n$ from the categorical distribution $\rho(\cdot|\gamma_{\text{+}}^{(t\text{-}1)})$, defined on $P$ categories, incurs $\mathcal{O}(P)$ complexity (line~\ref{line:tgs_cate_sampling});
        
        \item  Computing the proposal $\phi_{n}(\cdot|\gamma_{\text{+}}^{(t\text{-}1)}(\ell_{\bar{n}}))$ incurs $\mathcal{O}(M)$ complexity (line~\ref{line:tgs_comp_proposal}) since $\widetilde{\pi}_{i}(\cdot|\gamma_{\text{+}}^{(t\text{-}1)}(\ell_{\bar{i}}))$, $\nu_{i}^{(1)}(\gamma_{\text{+}}^{(t\text{-}1)})$, and $\nu_{i}^{(\beta)}(\gamma_{\text{+}}^{(t\text{-}1)})$ were generated as by-products of computing $\rho(\cdot|\gamma_{\text{+}}^{(t\text{-}1)})$ in the previous iteration;
        
        \item  Sampling $\gamma_{\text{+}}^{(t)}(\ell_{n})$ from $\phi_{n}(\cdot|\gamma_{\text{+}}^{(t\text{-}1)}(\ell_{\bar{n}}))$ incurs $\mathcal{O}(M)$ complexity (line~\ref{line:tgs_sol_sample});
        
        \item Computing $\widetilde{\rho}(\cdot|\gamma_{\text{+}}^{(t)})$ for the $(t+1)$-th iteration is only needed when $\gamma_{\text{+}}^{(t)}(\ell_{n})\neq\gamma_{\text{+}}^{(t\texttt{-}1)}(\ell_{n})$, and requires $\mathcal{O}(P)$ complexity (lines~\ref{line:tgs_sel_comp_begin}-\ref{line:tgs_sel_comp_end}), because for each $i\in\{1\mathnormal{:}P\}$, evaluating \eqref{eq:selection_probability} from the available $\pi_{i}(\gamma_{\text{+}}^{(t)}(\ell_{i})|\gamma_{\text{+}}^{(t)}(\ell_{\bar{i}}))$ and $\phi_{i}(\gamma_{\text{+}}^{(t)}(\ell_{i})|\gamma_{\text{+}}^{(t)}(\ell_{\bar{i}}))$ only requires a constant time complexity (Corollary~\ref{cor:decomposition}); and

        \item   Normalizing $\widetilde{\rho}(\cdot|\gamma_{\text{+}}^{(t)})$ incurs $\mathcal{O}(P)$ complexity (line~\ref{line:tgs_sel_normal}).
    \end{itemize}
    $TGS^{+}$ also requires the initial coordinate distribution $\rho(\cdot|\gamma_{\text{+}}^{(0)})$ as an input, which can be pre-computed via the initialization routine described in Algorithm~\ref{algo:TGS_init}. The recursive construct (lines \ref{line:init_recur_1}-\ref{line:init_recur_2}, \ref{line:init_recur_3}-\ref{line:init_recur_4}) reduces the $\mathcal{O}(P^{2}M)$ complexity \cite{vo2017efficient} to an $\mathcal{O}(PM)$ complexity. Specifically, for each $i\in\{1\mathnormal{:}P\}$, computing $\widetilde{\pi}_{i}(\gamma_{\text{+}}^{(0)}(\ell_{i})|\gamma_{\text{+}}^{(0)}(\ell_{\bar{i}}))$, $\nu_{i}^{(1)}(\gamma_{\text{+}}^{(0)})$, $\nu_{i}^{(\beta)}(\gamma_{\text{+}}^{(0)})$ and $\widetilde{\rho}(i|\gamma_{\text{+}}^{(0)})$ incurs $\mathcal{O}(M)$ complexity.

    \begin{algorithm}[t!]
        \caption{Initialization}
        \label{algo:TGS_init}
        
        \Input{$\gamma^{(0)}_{\text{+}}$, $\alpha$, $\beta$, $[\eta_i(j)]^{P}_{i\text{=}1}$}
        \Output{$[\widetilde{\pi}_i(\cdot|\gamma^{(0)}_{\text{+}}\!(\ell_{\bar{i}}))]^{P}_{i\text{=}1}$, $[\nu^{(1)}_i\!(\gamma^{(0)}_{\text{+}})]^{P}_{i\text{=}1}$, $[\nu^{(\beta)}_i\!(\gamma^{(0)}_{\text{+}})]^{P}_{i\text{=}1}$, $\rho(\cdot|\gamma^{(0)}_{\text{+}})$}
        \vspace{0.1cm}\hrule\vspace{0.1cm}
        \SetInd{0.3em}{0.8em}

        $M \leftarrow \textsf{\small size}([\eta_i(j)]^{P}_{i\text{=}1}, \textsf{\footnotesize col})\texttt{-}2$\;
        
        \ForEach{$j = -1:M$}{\label{line:init_recur_1}
            $\widetilde{\pi}_{P}(j|\gamma^{(0)}_{\text{+}}\!(\ell_{\bar{P}})) \leftarrow 1$\;
        }

        \ForEach{$i = 1:P$}{
            \If{$\gamma^{(0)}_{\text{+}}\!(\ell_i) > 0$}{
                $\widetilde{\pi}_P(\gamma^{(0)}_{\text{+}}\!(\ell_i)|\gamma^{(0)}_{\text{+}}\!(\ell_{\bar{P}})) \leftarrow 0$\;\label{line:init_recur_2}
            }
        }
    
        \ForEach{$i = 1:P$}{
            \ForEach{$j = -1:M$}{
              \uIf{$\widetilde{\pi}_{P}(j|\gamma^{(0)}_{\text{+}}\!(\ell_{\bar{P}})) = 0$ \and $j \neq \gamma^{(0)}_{\text{+}}\!(\ell_{i})$}{\label{line:init_recur_3}
                    $\widetilde{\pi}_{i}(j|\gamma^{(0)}_{\text{+}}\!(\ell_{\bar{i}})) \leftarrow 0$\;
                }
                \Else{
                    $\widetilde{\pi}_{i}(j|\gamma^{(0)}_{\text{+}}\!(\ell_{\bar{i}})) \leftarrow \eta_i(j)$\;\label{line:init_recur_4}
                }
            }
    
            $\pi_i(\gamma^{(0)}_{\text{+}}\!(\ell_i)|\gamma^{(0)}_{\text{+}}\!(\ell_{\bar{i}})) \leftarrow \frac{\widetilde{\pi}_i(\gamma^{(0)}_{\text{+}}\!(\ell_i) | \gamma^{(0)}_{\text{+}}\!(\ell_{\bar{i}}))}{\nu^{(1)}_{i}\!(\gamma^{(0)}_{\text{+}})}$\;
            
            $\phi_i(\gamma^{(0)}_{\text{+}}\!(\ell_i)|\gamma^{(0)}_{\text{+}}\!(\ell_{\bar{i}}))  \leftarrow \alpha\pi_i(\gamma^{(0)}_{\text{+}}\!(\ell_i)|\gamma^{(0)}_{\text{+}}\!(\ell_{\bar{i}}))$ $\phantom{++++++++++++} + \frac{(1\text{-}\alpha) \widetilde{\pi}^{\beta}_i\!(\gamma^{(0)}_{\text{+}}\!(\ell_i)|\gamma^{(0)}_{\text{+}}\!(\ell_{\bar{i}}))}{\nu^{(\beta)}_{i}\!(\gamma^{(0)}_{\text{+}})}$\;
    
            $\widetilde{\rho}(i|\gamma^{(0)}_{\text{+}}) \leftarrow \frac{\phi_i(\gamma^{(0)}_{\text{+}}\!(\ell_i)|\gamma^{(0)}_{\text{+}}\!(\ell_{\bar{i}}))}{\pi_i(\gamma^{(0)}_{\text{+}}\!(\ell_i)|\gamma^{(0)}_{\text{+}}\!(\ell_{\bar{i}}))}$\;
        }
        
        $\rho(\cdot|\gamma^{(0)}_{\text{+}}) \leftarrow \frac{\widetilde{\rho}(\cdot|\gamma^{(0)}_{\text{+}})}{\langle \widetilde{\rho}(\cdot|\gamma^{(0)}_{\text{+}}), 1 \rangle}$\;
    \end{algorithm}

    \begin{propo}\label{prop:convergence}
        Starting from any positive 1-1 initialization, the Markov Chain $\{\gamma_{\text{+}}^{(t)}\}_{t\texttt{=}1}^{\infty}$ generated by the TGS kernel (Algorithm \ref{algo:TGS_main}) is ergodic, and converges to a stationary distribution (not necessarily $\pi$). Additionally, taking into account the importance weight $w^{(t)}\propto P/\left\langle \rho(\cdot|\gamma_{\text{+}}^{(t)}),1\right\rangle$, the weighted samples converge to the stationary distribution $\pi$ in the sense that for any bounded function $h:\Gamma\!_{\text{+}}\rightarrow\mathbb{R}$
        \begin{equation}\label{eq:importance_tempering-TGS}
            \lim_{T\rightarrow\infty}\sum_{t\texttt{=}1}^{T}w^{(t)}h(\gamma_{\text{+}}^{(t)}) = \sum_{\gamma_{\text{+}}\in\Gamma\!_{\text{+}}}\pi(\gamma_{\text{+}})h(\gamma_{\text{+}}),
        \end{equation}
        where the coordinate probability distribution $\rho(\cdot|\gamma_{\text{+}}^{(t)})$ is given by \eqref{eq:selection_probability}. Further, the variance of the importance weights is stable, i.e., does not grow with the number of coordinates.
    \end{propo}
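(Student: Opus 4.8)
\noindent\emph{Proof plan.} The plan is to treat $\{\gamma_{\text{+}}^{(t)}\}_{t\geq0}$ as a homogeneous Markov chain on the \emph{finite} set $\mathcal{X}\subseteq\Gamma_{\!\text{+}}$ of association maps $\gamma_{\text{+}}$ with $\pi(\gamma_{\text{+}})>0$, first establish ergodicity (irreducibility plus aperiodicity), then identify its unique stationary law by a detailed-balance computation, and finally deduce the importance-tempering limit \eqref{eq:importance_tempering-TGS} together with the weight-stability claim. Throughout I would write $Z_{\rho}(\gamma_{\text{+}})\triangleq\sum_{i=1}^{P}\phi_{i}(\gamma_{\text{+}}(\ell_{i})|\gamma_{\text{+}}(\ell_{\bar{i}}))/\pi_{i}(\gamma_{\text{+}}(\ell_{i})|\gamma_{\text{+}}(\ell_{\bar{i}}))$ for the normaliser of the coordinate distribution \eqref{eq:selection_probability}, so that $\rho(i|\gamma_{\text{+}})=\widetilde{\rho}(i|\gamma_{\text{+}})/Z_{\rho}(\gamma_{\text{+}})$ and the claimed importance weight is $w^{(t)}\propto P/Z_{\rho}(\gamma_{\text{+}}^{(t)})$. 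Since $\alpha>0$, the mixture \eqref{eq:mixture_proposal} satisfies $\phi_{i}(j|\gamma_{\text{+}}(\ell_{\bar{i}}))\geq\alpha\,\pi_{i}(j|\gamma_{\text{+}}(\ell_{\bar{i}}))$; hence $\phi_{i}$ has the same support as $\pi_{i}$, and $\widetilde{\rho}(i|\gamma_{\text{+}})=\phi_{i}/\pi_{i}\geq\alpha>0$, so every coordinate is selected with positive probability. Aperiodicity is immediate because $\phi_{i}(\gamma_{\text{+}}(\ell_{i})|\gamma_{\text{+}}(\ell_{\bar{i}}))>0$ gives a self-loop at every state. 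For irreducibility I would show $\mathcal{X}$ is connected under single-coordinate updates that preserve the positive 1-1 property: from any $\gamma_{\text{+}}$, reset its coordinates one at a time to a ``null'' index in $\{\texttt{-}1,0\}$ with positive $\eta_{i}$ (available under the usual non-degeneracy of the detection/survival/birth models), reaching the all-null map; from the all-null map, assign the target values of any $\gamma'_{\text{+}}\in\mathcal{X}$ one coordinate at a time -- each partial assignment is a sub-map of the positive 1-1 map $\gamma'_{\text{+}}$ and hence valid, and $\eta_{i}(\gamma'_{\text{+}}(\ell_{i}))>0$ since $\pi(\gamma'_{\text{+}})>0$. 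Every such move has positive probability by the remarks above, so on the finite state space the chain is irreducible and aperiodic, hence ergodic with a unique stationary law $\pi_{\phi}$.

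Next I would show $\pi_{\phi}(\gamma_{\text{+}})\propto\pi(\gamma_{\text{+}})\,Z_{\rho}(\gamma_{\text{+}})$ via detailed balance. For $\gamma_{\text{+}},\gamma'_{\text{+}}$ differing only in coordinate $i$ (so $\gamma_{\text{+}}(\ell_{\bar{i}})=\gamma'_{\text{+}}(\ell_{\bar{i}})$), the TGS kernel gives $P(\gamma_{\text{+}},\gamma'_{\text{+}})=Z_{\rho}(\gamma_{\text{+}})^{-1}\big(\phi_{i}(\gamma_{\text{+}}(\ell_{i})|\gamma_{\text{+}}(\ell_{\bar{i}}))/\pi_{i}(\gamma_{\text{+}}(\ell_{i})|\gamma_{\text{+}}(\ell_{\bar{i}}))\big)\phi_{i}(\gamma'_{\text{+}}(\ell_{i})|\gamma_{\text{+}}(\ell_{\bar{i}}))$; multiplying by $\pi_{\phi}(\gamma_{\text{+}})$ cancels $Z_{\rho}(\gamma_{\text{+}})$, and the chain-rule factorisation $\pi(\gamma_{\text{+}})=\pi_{i}(\gamma_{\text{+}}(\ell_{i})|\gamma_{\text{+}}(\ell_{\bar{i}}))\,\pi_{-i}(\gamma_{\text{+}}(\ell_{\bar{i}}))$ cancels $\pi_{i}$, leaving $\pi_{\phi}(\gamma_{\text{+}})P(\gamma_{\text{+}},\gamma'_{\text{+}})\propto\pi_{-i}(\gamma_{\text{+}}(\ell_{\bar{i}}))\,\phi_{i}(\gamma_{\text{+}}(\ell_{i})|\gamma_{\text{+}}(\ell_{\bar{i}}))\,\phi_{i}(\gamma'_{\text{+}}(\ell_{i})|\gamma_{\text{+}}(\ell_{\bar{i}}))$, which is symmetric in $\gamma_{\text{+}},\gamma'_{\text{+}}$ since $\gamma_{\text{+}}(\ell_{\bar{i}})=\gamma'_{\text{+}}(\ell_{\bar{i}})$; the case $\gamma_{\text{+}}=\gamma'_{\text{+}}$ is trivial. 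Hence $\pi_{\phi}$ is stationary, and in general $\pi_{\phi}\neq\pi$ because $Z_{\rho}$ is non-constant.

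Finally, setting $w(\gamma_{\text{+}})\triangleq\pi(\gamma_{\text{+}})/\pi_{\phi}(\gamma_{\text{+}})\propto P/Z_{\rho}(\gamma_{\text{+}})$ -- the stated weight -- the Markov-chain ergodic theorem for the $\pi_{\phi}$-ergodic chain gives $\tfrac{1}{T}\sum_{t=1}^{T}g(\gamma_{\text{+}}^{(t)})\to\sum_{\gamma_{\text{+}}}\pi_{\phi}(\gamma_{\text{+}})g(\gamma_{\text{+}})$ for every bounded $g$ (all functions on the finite $\mathcal{X}$ are bounded). Applying this to $g=w\,h$ and to $g=w$ and dividing, the $\pi_{\phi}$-normaliser and the proportionality constant in $w$ cancel, yielding $\sum_{t}w(\gamma_{\text{+}}^{(t)})h(\gamma_{\text{+}}^{(t)})/\sum_{t}w(\gamma_{\text{+}}^{(t)})\to\sum_{\gamma_{\text{+}}}\pi(\gamma_{\text{+}})h(\gamma_{\text{+}})$, which is \eqref{eq:importance_tempering-TGS} once the $w^{(t)}$ are self-normalised to sum to one. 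For weight stability I would use $\widetilde{\rho}(i|\gamma_{\text{+}})\geq\alpha$ for each $i$, whence $Z_{\rho}(\gamma_{\text{+}})\geq\alpha P$ and therefore $P/Z_{\rho}(\gamma_{\text{+}})\leq 1/\alpha$: the importance weights are bounded by $1/\alpha$ uniformly in $\gamma_{\text{+}}$ and, crucially, in $P$, so $\mathrm{Var}_{\pi_{\phi}}(w)\leq 1/\alpha^{2}$ does not grow with the number of coordinates.

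I expect the main obstacle to be the irreducibility step: unlike a generic Gibbs sampler, here a single-coordinate update must respect the positive 1-1 constraint, so a coordinate cannot be moved to an arbitrary value, and the connectivity of $\mathcal{X}$ has to be built explicitly through the all-null map -- in particular one must confirm that a usable null index (equivalently, that the all-null map) lies in $\mathcal{X}$, which is precisely where the non-degeneracy assumptions on the detection, survival and birth models enter. Everything else amounts to instantiating the Zanella--Roberts TGS template \cite{zanella2019scalable} with the conditionals of Proposition~\ref{prop:conditional} and the constant-time update of Corollary~\ref{cor:decomposition}.
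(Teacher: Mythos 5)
Your proposal is correct, and it reaches the same conclusions as the paper by a more self-contained route. The paper's proof is essentially citation-driven: it invokes \cite{vo2017efficient} for the $\pi$-irreducibility of the underlying Gibbs sampler, \cite{roberts1994simple} for irreducibility of the tempered chain, and Propositions 1 and 2 of \cite{zanella2019scalable} for reversibility with respect to $\pi\varpi$ (where $\varpi(\gamma_{\text{+}})=\frac{1}{P}\sum_{i}\phi_{i}/\pi_{i}$) and for the stability of the importance weights, only verifying locally that $\eta_{i}(j)\in(0,\infty)$ makes $\pi$, $\phi_{i}$ and $\varpi$ strictly positive on positive 1-1 maps. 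You instead re-derive each ingredient: an explicit connectivity argument through the all-null map for irreducibility (this is in substance the same construction used in \cite{vo2017efficient} for the systematic-scan case, correctly adapted to single-coordinate moves under the positive 1-1 constraint, and it relies on exactly the same non-degeneracy assumption $\eta_{i}(j)>0$ that the paper imports, since the null indices $j\in\{\texttt{-}1,0\}$ are never masked); an explicit detailed-balance computation identifying the tempered stationary law as $\pi_{\phi}\propto\pi\cdot Z_{\rho}$ with $Z_{\rho}=P\varpi$, which is the content of \cite[Proposition 1]{zanella2019scalable}; and the self-normalised importance-sampling limit via the ergodic theorem, matching the paper's ratio form of \eqref{eq:importance_tempering-TGS}. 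Your weight-stability argument differs in detail but not in substance: from $\phi_{i}\geq\alpha\pi_{i}$ in \eqref{eq:mixture_proposal} you get the uniform bound $w\leq 1/\alpha$ independent of $P$, whereas the paper quotes the sharper bound $\mathrm{var}(w^{(t)})\leq\max_{i,\gamma_{\text{+}}}\pi_{i}/\phi_{i}-1\leq 1/\alpha-1$ from \cite[Proposition 2]{zanella2019scalable}; both establish that the variance does not grow with the number of coordinates. What your version buys is transparency — in particular it makes visible exactly where the positive 1-1 constraint threatens irreducibility and why the null states rescue it — at the cost of re-proving results the paper is content to cite.
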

    \begin{proof}
        See Appendix (Subsection \ref{ss:proof-of-proposition-3}).
    \end{proof}
    \begin{rem}
        In TGS, the variance of the weights does not grow with the number of coordinates, and hence the algorithm scales gracefully with $P$. Combining tempering with importance sampling is a strategy used to improve slow mixing in generic MCMC \cite{zanella2019scalable}. However, the variance of the weights grows exponentially with the number of coordinates \cite{tokdar2010importance, owen2013monte}, which means that the performance of a generic MCMC method deteriorates with large values of $P$. Unlike generic tempering in MCMC, TGS only tempers the conditional of the selected coordinate, but still inherits the benefit of improved mixing~\cite{zanella2019scalable}.
    \end{rem}

\subsection{Salient Special Cases}\label{ss:special_cases}
    \subsubsection{Random-scan GS}\label{ss:rgs+}
        \begin{algorithm}[t!]
            \caption{{\small$RGS^+$} }
            \label{algo:RGS}
        
            \Input{ $\gamma^{(0)}_{\text{+}}$, $T$, $[\eta_i(j)]^{P}_{i\text{=}1}$, $[\widetilde{\pi}_i(\cdot|\gamma^{(0)}_{\text{+}}\!(\ell_{\bar{i}}))]^{P}_{i\text{=}1}$ }
            \Output{$[\gamma^{(t)}_{\text{+}}]^{T}_{t\text{=}1}$}
            \vspace{0.1cm}\hrule\vspace{0.1cm}
            
            \SetInd{0.3em}{0.8em}
            $M \leftarrow \textsf{\small size}([\eta_i(j)]^{P}_{i\text{=}1}, \textsf{\footnotesize col})\texttt{-}2$\;
        
            \ForEach{$t = 1:T$}{
                $n \sim \mathcal{U}(1,P)$\;\label{line:rgs_cate_sampling}
                
                $\gamma^{(t)}_{\text{+}}\!(\ell_n) \sim \textsf{\small Categorical} ([\texttt{-}1\mathnormal{:}M], \widetilde{\pi}_n(\cdot|\gamma^{(t\text{-}1)}_{\text{+}}\!(\ell_{\bar{n}})))$\;\label{line:rgs_sol_sample}
                
                $\gamma^{(t)}_{\text{+}} \leftarrow [\gamma^{(t\text{-}1)}_{\text{+}}\!(\ell_{1:n\text{-}1}),\gamma^{(t)}_{\text{+}}\!(\ell_n),\gamma^{(t\text{-}1)}_{\text{+}}\!(\ell_{n\text{+}1:P})]$\;
        
                \ForEach{$i = 1:P$}{\label{line:rgs_decompo_begin}
                    \textsf{\small Compute} $\widetilde{\pi}_i(\cdot|\gamma^{(t)}_{\text{+}}\!(\ell_{\bar{i}}))$ via Corollary~\ref{cor:decomposition}\;\label{line:rgs_decompo_end}
                }
            }
        \end{algorithm}
        Under the proposed TGS framework, setting $\beta=1$ in the proposal \eqref{eq:mixture_proposal} translates to uniformly random coordinate selection. This special case, described by Algorithm \ref{algo:RGS}, is indeed RGS, herein referred to as $RGS^{+}$ to distinguish it from generic $\mathcal{O}(TPM)$ complexity RGS. Since $\rho(\cdot|\gamma_{\text{+}})$ is a uniform distribution, it requires no computation. However, the masking steps for $\widetilde{\pi}_{i}(\cdot|\gamma_{\text{+}}(\ell_{\bar{i}}))$ are still needed. Nonetheless the complexity of $RGS^{+}$ reduces to $\mathcal{O}(T(P+M))$ because:
        \begin{itemize}
            \item Sampling $n$ uniformly from $\{1\mathnormal{:}P\}$ incurs $\mathcal{O}(P)$ complexity (line \ref{line:rgs_cate_sampling});
            \item Sampling $\gamma_{\text{+}}^{(t)}(\ell_{n})$ from $\widetilde{\pi}_{n}(\cdot|\gamma_{\text{+}}^{(t\text{-}1)}\!(\ell_{\bar{n}}))$ incurs $\mathcal{O}(M)$ complexity (line~\ref{line:rgs_sol_sample}); and
            \item  Computing $[\widetilde{\pi}_{i}(\cdot|\gamma_{\text{+}}(\ell_{\bar{i}}))]_{i\text{=}1}^{P}$ (based on Corollary~\ref{cor:decomposition}) takes $\mathcal{O}(P)$ complexity (lines~\ref{line:rgs_decompo_begin}-\ref{line:rgs_decompo_end}).
        \end{itemize}

    \subsubsection{Deterministic-scan GS}\label{ss:dgs}
        The other extreme of complete randomness, namely deterministic coordinate selection can also be accommodated by $TGS^{+}$. The simplest deterministic scan is to traverse the coordinates according to the periodic sequence $1, 2, \cdots, P$, i.e., the selected coordinate at $t$-th iteration is given by $c(t)=1+\text{mod}(t\texttt{-}1,P)$. To realize this in the TGS framework, a proposal of the form
        \begin{equation*}
            \varphi_{i}(j|\gamma_{\text{+}}^{(t)}(\ell_{\bar{i}})) = \delta_{c(t)}[i]\phi_{i}(j|\gamma_{\text{+}}^{(t)}(\ell_{\bar{i}})) + \varepsilon(1-\delta_{c(t)}[i]),
        \end{equation*}
        where $\varepsilon$ is a very small positive number. Note that the corresponding coordinate distribution is given by
        \begin{equation*}
            \rho(i|\gamma_{\text{+}}) \propto \frac{\delta_{c(t)}[i]\phi_{i}(\gamma_{\text{+}}^{(t)}(\ell_{i})|\gamma_{\text{+}}^{(t)}(\ell_{\bar{i}}))}{\pi_{i}(\gamma_{\text{+}}^{(t)}(\ell_{i})|\gamma_{\text{+}}^{(t)}(\ell_{\bar{i}}))}+\frac{\varepsilon(1-\delta_{c(t)}[i])}{\pi_{i}(\gamma_{\text{+}}^{(t)}(\ell_{i})|\gamma_{\text{+}}^{(t)}(\ell_{\bar{i}}))}.
        \end{equation*}
        Hence, in the limiting case as $\varepsilon$ tends to zero, only coordinate $c(t)$ can be selected at the $t$-th iterate. 

        In principle, this special case has $\mathcal{O}(T(P+M))$ complexity. However, since the coordinate is effectively selected according to the prescribed sequence $c(t)$, the sampling step is not needed. Moreover, using the recursive construct in the initialization (Algorithm~\ref{algo:TGS_init}), it is possible to implement this so-called $DGS^{+}$ special case as described in Algorithm~\ref{algo:DGS}, which only incurs an $\mathcal{O}(TM)$ complexity. A similar approach based on the reverse scan order, i.e., $P, P-1, \cdots, 1$, is possible by setting $c(t)=P-\text{mod}(t\texttt{-}1,P)$ in Algorithm~\ref{algo:DGS} (line~\ref{line:dgs_c(t)}).

        \begin{algorithm}[t!]
            \caption{{\small$DGS^{+}$} }
            \label{algo:DGS}
            
            \Input{$\gamma^{(0)}_{\text{+}}$, $T$, $\alpha$, $\beta$, $[\eta_i(j)]^{P}_{i\text{=}1}$, $[\widetilde{\pi}_i(\cdot|\gamma^{(0)}_{\text{+}}\!(\ell_{\bar{i}}))]^{P}_{i\text{=}1}$, $[\nu^{(1)}_i\!(\gamma^{(0)}_{\text{+}})]^{P}_{i\text{=}1}$, $[\nu^{(\beta)}_i\!(\gamma^{(0)}_{\text{+}})]^{P}_{i\text{=}1}$}
            \Output{$[\gamma^{(t)}_{\text{+}}]^{T}_{t\text{=}1}$}
            \vspace{0.1cm}\hrule\vspace{0.1cm}
            \SetInd{0.3em}{0.8em}
            
            $M \leftarrow \textsf{\small size}([\eta_i(j)]^{P}_{i\text{=}1}, \textsf{\footnotesize col})\texttt{-}2$\;
    
            \ForEach{$t = 1:T$}{
                $n \leftarrow c(t)$; \,  $m \leftarrow c(t-1)$\;\label{line:dgs_c(t)}
                
                \If{$\gamma^{(t\text{-}1)}_{\text{+}}\!(\ell_m) > 0$}{\label{line:dgs_recursive2_begin}
                    $\widetilde{\pi}_{m}(\gamma^{(t\text{-}1)}_{\text{+}}\!(\ell_m)|\gamma^{(t\text{-}1)}_{\text{+}}\!(\ell_{\bar{m}})) \leftarrow 0$\;
                }\label{line:dgs_recursive2_end}
                
                \ForEach{$j = -1:M$}{
                
                    \uIf{$\widetilde{\pi}_{m}(j|\gamma^{(t\text{-}1)}_{\text{+}}\!(\ell_{\bar{m}})) = 0$ \and $j \neq \gamma^{(t\text{-}1)}_{\text{+}}\!(\ell_{n})$}{\label{line:dgs_recursive1_begin}
                        $\widetilde{\pi}_{n}(j|\gamma^{(t\text{-}1)}_{\text{+}}\!(\ell_{\bar{n}})) \leftarrow 0$\;
                    }
                    \Else{
                        $\widetilde{\pi}_{n}(j|\gamma^{(t\text{-}1)}_{\text{+}}\!(\ell_{\bar{n}})) \leftarrow \eta_n(j)$\;
                    }\label{line:dgs_recursive1_end}
                }
                
                $\phi_n(\cdot|\gamma^{(t\text{-}1)}_{\text{+}}\!(\ell_{\bar{n}}))  \leftarrow \frac{\alpha\widetilde{\pi}_n(\cdot|\gamma^{(t\text{-}1)}_{\text{+}}\!(\ell_{\bar{n}}))}{\nu^{(1)}_n\!(\gamma^{(t\text{-}1)}_{\text{+}})} +  \frac{(1\text{-}\alpha)\widetilde{\pi}^{\beta}_n\!(\cdot|\gamma^{(t\text{-}1)}_{\text{+}}\!(\ell_{\bar{n}}))}{\nu^{(\beta)}_n\!(\gamma^{(t\text{-}1)}_{\text{+}})}$\;
        
                $\gamma^{(t)}_{\text{+}}\!(\ell_n) \sim \textsf{\small Categorical}([\texttt{-}1$:$M], \phi_n(\cdot|\gamma^{(t\text{-}1)}_{\text{+}}\!(\ell_{\bar{n}})))$\;   \label{line:dgs_categorical}
                
                $\gamma^{(t)}_{\text{+}} \leftarrow [ \gamma^{(t\text{-}1)}_{\text{+}}\!(\ell_{1:n\text{-}1}), \gamma^{(t)}_{\text{+}}\!(\ell_{n}), \gamma^{(t\text{-}1)}_{\text{+}}\!(\ell_{n\text{+}1:P})]$\;
            }
        \end{algorithm}

        Note that in the literature, the terms ``Systematic-scan GS'', ``Sequential-scan GS'', and ``Deterministic-scan GS'' all refer to ``SGS''. The proposed $DGS^{+}$ is clearly not the same as SGS, but is closely related to it. Specifically, the sequence of every $P$-th iterate of $DGS^{+}$, with $\alpha=1$ or $\beta=1$, is indeed a sequence of SGS iterates. However, this approach to SGS (summarized in Algorithm~\ref{algo:SGS}), referred to as $SGS^{+}$, has to traverse $P$ coordinates to generate one iterate. Hence, for $T$ iterations of $DGS^{+}$, $SGS^{+}$ effectively has $T/P$ iterations. This also means that the complexity per iteration increases by $P$, and consequently $SGS^{+}$ incurs an effective $\mathcal{O}(TPM)$ complexity, a drastic reduction from $\mathcal{O}(TP^{2}M)$ as in \cite{vo2017efficient}.

        \begin{algorithm}[t!]
            \caption{{\small$SGS^+$}}
            \label{algo:SGS}
            
            \Input{ $\gamma^{(0)}_{\text{+}}$, $T$, $[\eta_i(j)]^{P}_{i\text{=}1}$,  $[\widetilde{\pi}_i(\cdot|\gamma^{(0)}_{\text{+}}\!(\ell_{\bar{i}}))]^{P}_{i\text{=}1}$ }
            \Output{$[\gamma^{(t)}_{\text{+}}]^{T}_{t\text{=}1}$}
            \vspace{0.1cm}\hrule\vspace{0.1cm}
            \SetInd{0.3em}{0.8em}
    
            $M \leftarrow \textsf{\small size}([\eta_i(j)]^{P}_{i\text{=}1}, \textsf{\footnotesize col})\texttt{-}2$\;
            
            \ForEach{$t = 1:T$}{
    
                $\gamma^{(t)}_{\text{+}}\!(\ell_P) \leftarrow \gamma^{(t\text{-}1)}_{\text{+}}\!(\ell_P)$\;
            
                \ForEach{$n = 1:P$}{
                
                    $m \leftarrow 1+\textsf{\small mod}(P\texttt{-}2\texttt{+}n,P)$\;
    
                    \If{$\gamma^{(t)}_{\text{+}}\!(\ell_m) > 0$}{\label{line:sgs_recursive2_begin}
                        $\widetilde{\pi}_{m}(\gamma^{(t)}_{\text{+}}\!(\ell_m)|\gamma^{(t)}_{\text{+}}\!(\ell_{1\colon n\text{-}1}),\gamma^{(t\text{-}1)}_{\text{+}}\!(\ell_{n\text{+}1\colon P})) \leftarrow 0$\;
                    }\label{line:sgs_recursive2_end}
                    
                    \ForEach{$j = -1:M$}{
                        \uIf{$\widetilde{\pi}_m(j|\gamma^{(t)}_{\text{+}}\!(\ell_{1\colon m\text{-}1}),\gamma^{(t\text{-}1)}_{\text{+}}\!(\ell_{m\text{+}1\colon P})) = 0$}{\label{line:sgs_recursive1_begin}
                            \hspace{4.0cm}\and $j \neq \gamma^{(t\text{-}1)}_{\text{+}}\!(\ell_{n})$
                            
                            $\widetilde{\pi}_{n}(j|\gamma^{(t)}_{\text{+}}\!(\ell_{1\colon n\text{-}1}),\gamma^{(t\text{-}1)}_{\text{+}}\!(\ell_{n\text{+}1\colon P})) \leftarrow 0$\;
                        }
                        \Else{
                            $\widetilde{\pi}_{n}(j|\gamma^{(t)}_{\text{+}}\!(\ell_{1\colon n\text{-}1}),\gamma^{(t\text{-}1)}_{\text{+}}\!(\ell_{n\text{+}1\colon P})) \leftarrow \eta_n(j)$\;
                        }\label{line:sgs_recursive1_end}
                    }
                    
                    $\gamma^{(t)}_{\text{+}}\!(\ell_n) \sim \textsf{\small Categorical}([\texttt{-}1$:$M],$   \label{line:sgs_categorical}
                    \hspace{2.8cm}$\widetilde{\pi}_n(\cdot|\gamma^{(t)}_{\text{+}}\!(\ell_{1\colon n\text{-}1}),\gamma^{(t\text{-}1)}_{\text{+}}\!(\ell_{n\text{+}1\colon P}))$\;
        
                }
                $\gamma^{(t)}_{\text{+}} \leftarrow [\gamma^{(t)}_{\text{+}}\!(\ell_{1:P})]$\;
            }
        \end{algorithm}

        \begin{rem}
            Apart from the forward-scan coordinate selection discussed above, other deterministic scan orders are possible. For efficiency and sample diversity, it is important that the periodic coordinate sequence $c(t)$ scans all $P$ coordinates in one period.
        \end{rem}

\subsection{TGS-GLMB Filter Implementation}\label{ss:filter_imple}
    The TGS-based implementation of the GLMB filter is the same as the SGS-based implementation in \cite[Algorithm 2]{vo2017efficient}, with two key changes:
    \begin{itemize}
        \item SGS-based truncation is replaced by the proposed $\mathcal{O}(T(P+M))$ TGS algorithm (Algorithm \ref{algo:TGS_main}); and
        \item There is an additional $\mathcal{O}(PM)$ complexity step for computing the initial coordinate distribution (Algorithm \ref{algo:TGS_init}). 
    \end{itemize}    
    Thus, the resulting GLMB filter implementation incurs an overall complexity of $\mathcal{O}(T(P+M)+PM+T\log T)$. The additional factor of $T\log T$ arises from the need to remove duplicates from the TGS output. Noting that in practice, the number $T$ of iterates is large, typically $T(P+M)>>PM$, nonetheless $\log T$ is small, and $P+M>>\log T$. This is a substantial reduction from the $\mathcal{O}(TP^{2}M)$ complexity of the SGS-based implementation.

\section{Experimental Results}\label{s:experiment}
    \begin{figure*}[t!]
        \centering
            \includegraphics[width=18.1cm]{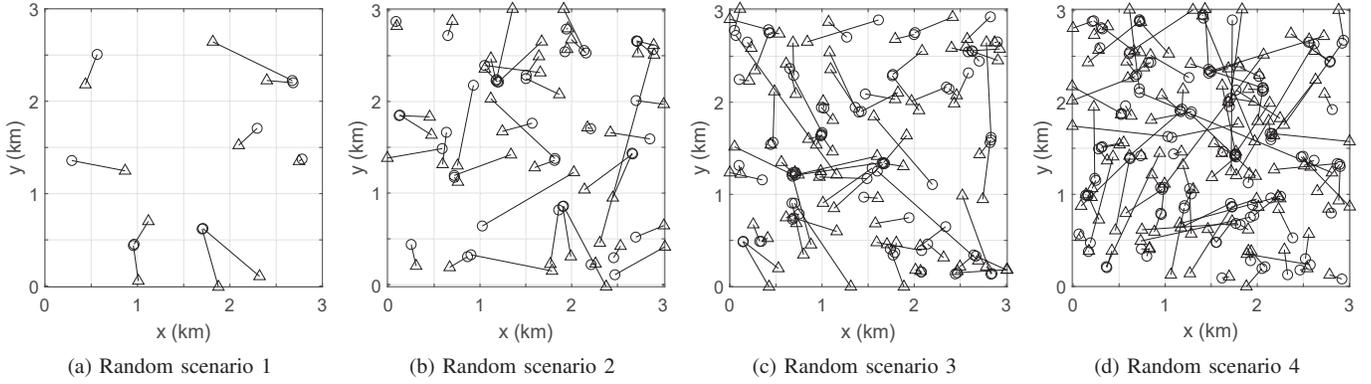}
        \caption{Examples of randomly generated ground truths ($\bigcirc/\bigtriangleup$: start/end).}
        \label{test:truth}
    \end{figure*}

This section presents numerical studies for GLMB filtering with truncation via the proposed TGS based schemes and the new recursive implementation of SGS:
\begin{enumerate}[1)]
    \item $TGS^{+}$: $\mathcal{O}(T(P+M))$ TGS (Algorithm~\ref{algo:TGS_main});
    \item $RGS^{+}$: $\mathcal{O}(T(P+M))$ RGS (Algorithm~\ref{algo:RGS});
    \item $\overrightarrow{DGS}$$^{+}$: $\mathcal{O}(TM)$ forward-scan DGS (Algorithm~\ref{algo:DGS});
    \item $\overleftarrow{DGS}$$^{+}$: $\mathcal{O}(TM)$ backward-scan DGS (Algorithm~\ref{algo:DGS});
    \item $SGS^{+}$: $\mathcal{O}(TPM)$ SGS (Algorithm~\ref{algo:SGS});
    \item $RGS$: $\mathcal{O}(TPM)$ RGS (Subsection~\ref{ss:rgs_glmb}); and
    \item $SGS$: $\mathcal{O}(TP^{2}M)$ SGS (Algorithm~2a in \cite{vo2017efficient}).
\end{enumerate}
Note that, although all methods use the same initialization (Algorithm~\ref{algo:TGS_init}), only $TGS^{+}$ requires $\rho(\cdot|\gamma_{\text{+}})$, whereas $RGS^{+}/SGS^{+}$ do not require the tempering related steps. No parallel implementations are used in our experiments.

A common linear Gaussian setup on a square $3~km~\times~3~km$ surveillance region over 100 time steps is used throughout this section. The number of objects is time varying due to various births and deaths. The single-object state is a 4D vector $[x, \dot{x}, y, \dot{y}]^{\textsf{T}}$ of 2D position and velocity which follows a constant velocity model with sampling period $1$ $s$ and process noise standard deviation $\sigma_{p}=5$~$ms^{-2}$ on each axis. Existing objects have a constant survival probability of $P_{S}=0.99$. New objects appear according to an LMB birth density with $N_{B}=50$ components. Each LMB component has a fixed birth probability $P_{B,\texttt{+}}(\ell_{i})=0.01$ and Gaussian birth density $f_{B,\texttt{+}}(x,\ell_{i})=\mathcal{N}(\cdot;m_{B}^{(i)},R_{B})$ for $i\in\{1,...,N_{B}\}$, where the means $m_{B}^{(i)}=[x^{(i)}, 0, y^{(i)}, 0]^{\textsf{T}}$ are uniformly spaced in the surveillance region, and $R_{B}=\mathrm{diag}([10,10,10,10]^{\textsf{T}})^{2}$. This implies a mean birth rate of $\lambda_{B}=0.5$ per scan, combined with the constant death or survival probability, results in a total of approximately $N_{X}=50$ trajectories, or a peak of approximately $33$ objects simultaneously. Observations are noisy 2D positions with noise standard deviation $\sigma_{m}=10$~$m$ on each axis. All objects have a constant detection probability of $P_{D}=0.86$. Clutter follows a Poisson model with a mean rate of $\lambda_{c}=90$ returns per scan and a uniform density on the observation space. Each of the GS implementations are run for $T=5000$ iterations, and the $TGS^{+}/\overrightarrow{DGS}$$^{+}/\overleftarrow{DGS}$$^{+}$ use default mixture and tempering parameters of $\alpha=\beta=0.5$.

The effects of individually varying key parameters affecting $T$, $P$, and $M$ are also examined. In addition to numerical studies with the default parameters described above, further experiments are also carried out according to Table~\ref{tbl:parameter} with varying number of GS iterations, total number of trajectories (or peak number of objects), plus detection and clutter rates. The effects of different mixture rates and tempering levels for $TGS^{+}$ are also studied.

\begin{table}[h!]
\renewcommand{\arraystretch}{1.15}
\caption{Parameter settings.}
\vspace*{-0.2cm}
    \footnotesize
    \label{tbl:parameter}
    \begin{center}
        \begin{tabular}{|c|c|}
            \hline
                \multicolumn{1}{|c|}{\textbf{\small{Parameter}}}   &    \multicolumn{1}{c|}{\textbf{\small{Setting (default)}}}   \\
            \hline \hline  
                Number of Iterations $T$ &  $1000$ - $10000$ $(5000)$ \\
            \hline
                Total Trajectories $N_{X}$ &  $10$ - $100$ $(50)$ \\
            \hline
                Detection Rate $P_{D}$ &  $0.78$ - $0.96$ $(0.86)$ \\
            \hline
                Clutter Rate $\lambda_{c}$ &  $50$ - $140$ $(90)$ \\
            \hline
                Mixture Rate $\alpha$  &   $0.1$ - $0.9$ $(0.5)$ \\
            \hline
                Tempering Level $\beta$ &  $0.1$ - $0.9$ $(0.5)$ \\
            \hline
        \end{tabular}
    \end{center}
\end{table}

Evaluations are undertaken for computation time, tracking accuracy, and (association) sample diversity via (i) measured wall-clock times, (ii) OSPA \cite{schuhmacher2008consistent} and OSPA$^{(2)}$ \cite{beard2020solution} metrics, and (iii) the number of unique solutions per scan during GLMB filtering. Results on the tracking accuracy and sample diversity of $RGS/SGS$ are the same as per $RGS^{+}/SGS^{+}$ and hence omitted. Note that each Monte Carlo trial generates random births, motions, deaths (see the example in Fig.~\ref{test:truth}) and measurements according to the multi-object model parameters. All empirical averages are then reported over 1000 Monte Carlo runs using MATLAB R2020a on a dual CPU machine with dual 64-Core AMD EPYC 7702 CPUs and 1TB RAM.

    \begin{figure*}[t!]
        \centering
            \includegraphics[width=18.1cm]{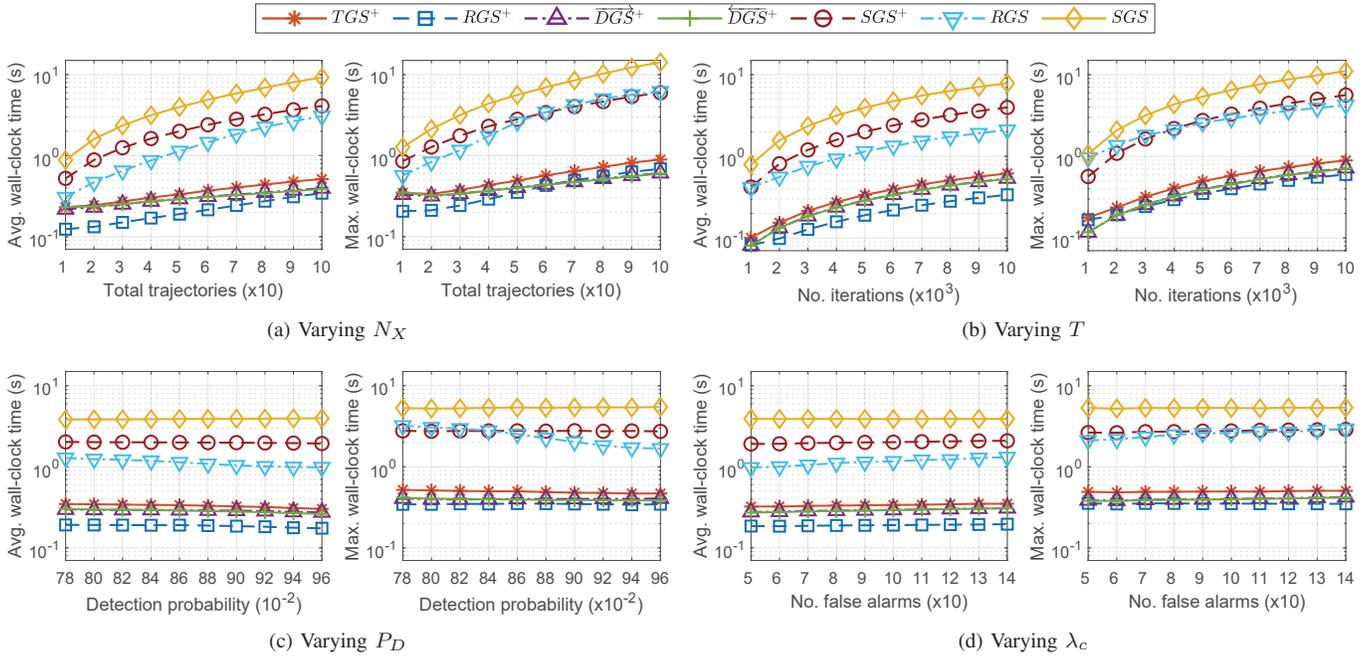}
        \vspace{-0.5cm}
        \caption{Experimental results for computation time.}
        \label{test:running_time}
    \end{figure*}

    \begin{figure*}[t!]
        \centering
            \includegraphics[width=18.1cm]{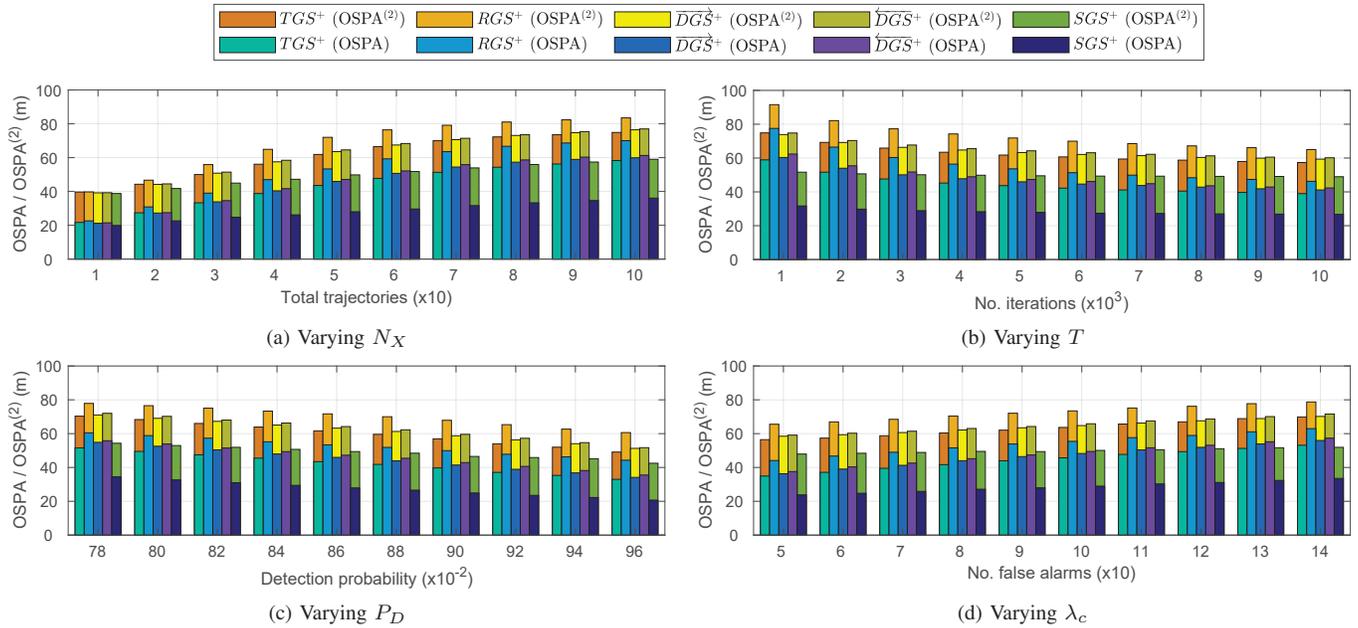}
        \vspace{-0.5cm}
        \caption{Experimental results for tracking accuracy.}
        \label{test:ospa2}
    \end{figure*}

\subsection{Computation Time}\label{ss:test_time}
    Fig.~\ref{test:running_time} shows (in log scale) the measured wall-clock times averaged per scan (\textit{lower is better}). It is imperative to examine both average and maximum times particularly for latency sensitive real-time applications \cite{reuter2017fast}. As expected, the results in Figs.~\ref{test:running_time}(a) and (b) show increasing computation times with increasing numbers of trajectories and iterations. The results in Figs.~\ref{test:running_time}(c) and (d) show generally flat trends of computation times with varying rates of detection and clutter, since the plots use the same default values for the number of iterations and number of trajectories. In all cases, it can be seen that all linear complexity GS implementations have similar run times, but all are significantly faster than the higher complexity methods $SGS^{+}/RGS/SGS$. Further, $RGS^{+}/SGS^{+}$ show better computational efficiency than $RGS/SGS$. Of the linear complexity truncation strategies, $TGS^{+}$ is generally the most expensive, due to the additional calculations involved with tempering and coordinate selection. The deterministic scan $\overrightarrow{DGS}$$^{+}/\overleftarrow{DGS}$$^{+}$ have more similar run times to $RGS^{+}$, depending on the fraction of time spent in sampling versus overheads. Nonetheless, $TGS^{+}/\overrightarrow{DGS}$$^{+}/\overleftarrow{DGS}$$^{+}$ are observed to have improved tracking accuracy and improved sample diversity, when compared to $RGS^{+}$, as will be seen and further discussed in the next subsections.

\subsection{Tracking Accuracy}\label{ss:test_accuracy}
    Fig.~\ref{test:ospa2} shows the OSPA and OSPA$^{(2)}$ metrics via overlay bar charts (\textit{lower is better}). Specifically, an average OSPA error (taken over all time steps) and a single OSPA$^{(2)}$ error (computed over the entire scenario window) are calculated with parameters $p=1$ and $c=100$~$m$. In essence, the OSPA distance captures the error between two sets of point estimates (i.e., single-scan), whereas the OSPA$^{(2)}$ distance captures the errors between two sets of track estimates (i.e., multi-scan), both in a mathematically consistent and physically intuitive manner.

    \begin{figure*}[t!]
        \centering
            \includegraphics[width=18.1cm]{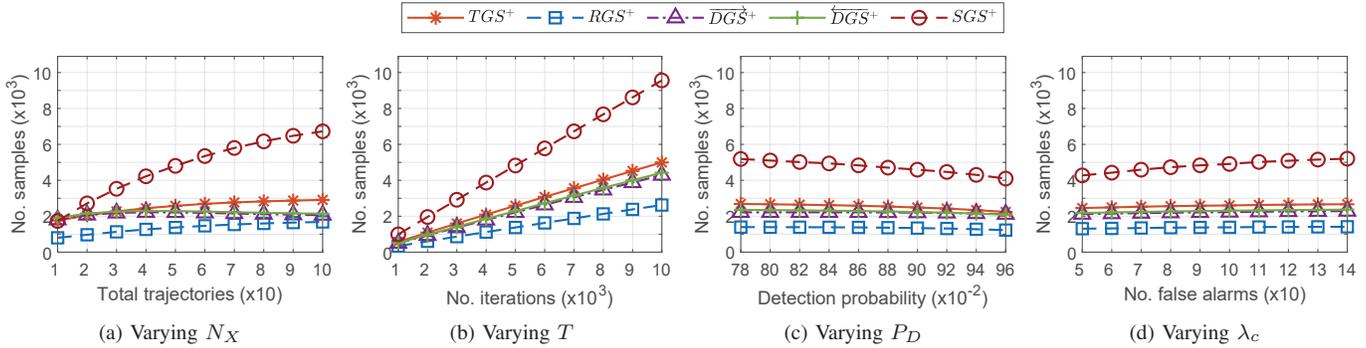}
        \caption{Experimental results for sample diversity.}
        \label{test:sample}
    \end{figure*}

    The resultant OSPA and OSPA$^{(2)}$ for varying numbers of trajectories and iterations are shown in Figs.~\ref{test:ospa2}(a) and (b). For a relatively small number of objects, the average filtering and tracking errors for all GS variants are virtually identical. As the number of objects is gradually increased, as expected, the corresponding errors for $SGS^{+}$ (the most expensive variant) exhibit the smallest increase of all variants. Conversely but also as expected, the errors for $RGS^{+}$ show the largest increase, due to slow mixing and insufficient diversity. The errors for $TGS^{+}$ are significantly better than those for $RGS^{+}$, even though both have the same linear complexity, due to the combination of tempering with smart coordinate selection. The errors for $\overrightarrow{DGS}$$^{+}/\overleftarrow{DGS}$$^{+}$ are higher than that of $TGS^{+}$ but are generally lower than for $RGS^{+}$. The plots for varying numbers of iterations similarly indicate that $SGS^{+}$ generally has the lowest error, while $RGS^{+}$ has the highest, and $TGS^{+}/\overrightarrow{DGS}$$^{+}/\overleftarrow{DGS}$$^{+}$ are comparatively robust. The effect of varying the detection and clutter rate is shown in Figs.~\ref{test:ospa2}(c) and (d). Though $SGS^{+}$ generally outperforms $TGS^{+}/\overrightarrow{DGS}$$^{+}/\overleftarrow{DGS}$$^{+}$, the gap appears to narrow as the SNR increases. Again, $TGS^{+}$ outperforms $\overrightarrow{DGS}$$^{+}/\overleftarrow{DGS}$$^{+}$ which are better than $RGS^{+}$ in all cases.
    
    Overall, the results suggest that $TGS^{+}/\overrightarrow{DGS}$$^{+}/\overleftarrow{DGS}$$^{+}$ generally outperform $RGS^{+}$ on tracking accuracy, even though they have similar complexities. In addition, there is small disparity between $\overrightarrow{DGS}$$^{+}/\overleftarrow{DGS}$$^{+}$, suggesting that scan order can influence results. Unsurprisingly, it is observed that $SGS^{+}$ generally outperforms $TGS^{+}$, due to a significant disparity in complexity and hence running times. Consequently, the proposed linear complexity $TGS^{+}$ based solutions have the potential to offer a robust trade-off between tracking performance and computational load.

\subsection{Sample Diversity}\label{ss:test_sample}
    The average number of unique samples over the entire scenario is shown in Fig.~\ref{test:sample} (\textit{higher is better}). A higher number of unique samples means a higher number of association hypotheses, or mixture components in the estimated GLMB filtering density, which generally results in a lower GLMB truncation error.

    The trend in Fig.~\ref{test:sample}(a) confirms that the number of unique samples follows the increase with the number of trajectories, which is necessary to capture the corresponding increase in the diversity of the components in the filtering density. In the case of the lowest number of objects, it can be seen in Fig.~\ref{test:ospa2}(a) that even though all the GS variants have the same tracking accuracy, they generally produce different numbers of unique solutions. This is due to the fact that a small number of samples is still sufficient to capture the GLMB filtering density. As the number of objects increases, it can be seen that $SGS^{+}$ produces more than twice the number of unique samples of $TGS^{+}$, but incurs a significant increase in time complexity, which is expected due to the exhaustive traversal of all coordinates in $SGS^{+}$. In addition, the informed (single) coordinate selection in $TGS^{+}$ produces more unique samples than $RGS^{+}$ even though both have the same complexity. Note also from Fig.~\ref{test:sample}(a) that the number of unique samples from $\overrightarrow{DGS}$$^{+}/\overleftarrow{DGS}$$^{+}$ is also higher than that from $RGS^{+}$ but lower than that from $TGS^{+}$. The relative trends in the number of unique samples are generally consistent with those of the tracking error shown in Fig.~\ref{test:ospa2}(a).

    Examination of Fig.~\ref{test:sample}(b) confirms that the number of iterations directly influences to the number of unique samples. It can also be seen that $TGS^{+}$ produces up to twice the number of unique samples as $RGS^{+}$. While $\overrightarrow{DGS}$$^{+}/\overleftarrow{DGS}$$^{+}$ produce slightly fewer unique samples than $TGS^{+}$, there is also a corresponding reduction in the observed run times. As expected, $SGS^{+}$ outperforms the other linear complexity variants, but at a significant increase in computational complexity. Again, it is observed in general that the relative increase in the number of unique samples is consistent with the increasing running times in Fig.~\ref{test:running_time}(b) and with the decreasing OSPA and OSPA$^{(2)}$ values in Fig.~\ref{test:ospa2}(b). Inspection of the curves for increasing detection rates in Fig.~\ref{test:sample}(c) shows a decreasing trend due to lower number of components required to accurately represent the GLMB filtering density in higher SNR. Inspection of the curves for increasing clutter rates in Fig.~\ref{test:sample}(d) shows the converse trend.
    
    It can be seen that $TGS^{+}/\overrightarrow{DGS}$$^{+}/\overleftarrow{DGS}$$^{+}$ generally outperform $RGS^{+}$ on increased sample diversity, and that the former has the same (or smaller) complexity than the latter. Compared to $RGS^{+}$, the experimental results in general suggest that the use of tempering and/or state informed coordinate selection can significantly improve slow mixing and sample diversity, but nonetheless retain the benefit of linear complexity. Furthermore, the improvement in tracking accuracy of each method levels off once enough significant samples have accumulated.

    \begin{figure*}[t!]
        \centering
            \includegraphics[width=18.1cm]{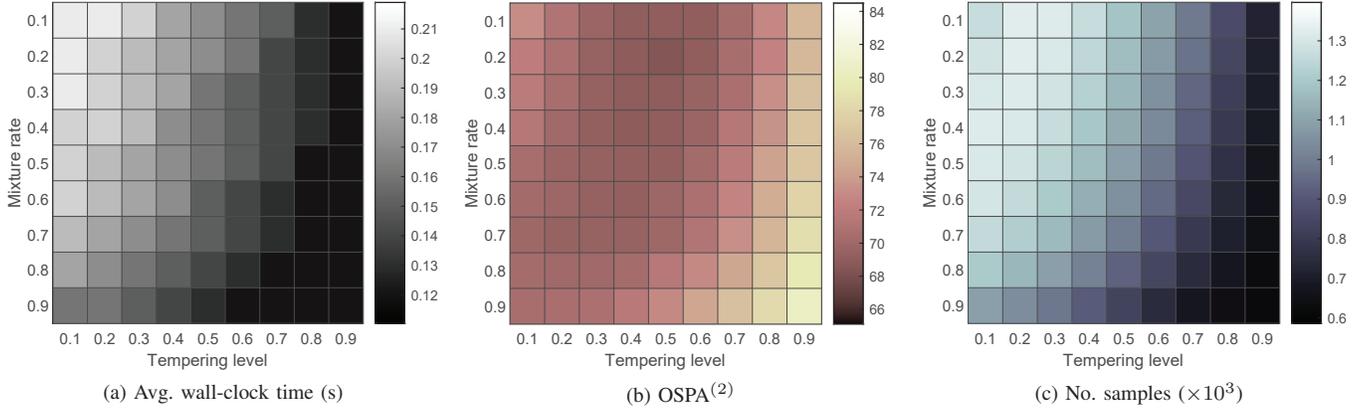}
        \caption{Experimental results for mixture rate and tempering level.}
        \label{test:mixture_tempering}
    \end{figure*}

\subsection{Mixture Rate and Tempering Level}\label{ss:mixture_tempering}
    Fig.~\ref{test:mixture_tempering} shows the experimental results for $TGS^{+}$, in 2D histograms, with varying mixture and tempering parameters, $\alpha$ and $\beta$. Recall from \eqref{eq:mixture_proposal} that higher values $\alpha$ or $\beta$ weaken the effect of tempering. Note from the averaged wall-clock times over all scenarios shown in Fig.~\ref{test:mixture_tempering}(a) that weak tempering or mixing slightly reduces computation time. More unique solutions require slightly longer run times due to the removal of duplicated samples. Tracking accuracy (in OSPA$^{(2)}$) generally improves with stronger tempering or mixing (low values of $\alpha$ or $\beta$) as shown in Fig.~\ref{test:mixture_tempering}(b). However, tracking performance may degrade with excessive tempering that over flattens the significant modes of the original distribution. From the average number of unique samples shown in Fig.~\ref{test:mixture_tempering}(c), observe that stronger tempering or mixing can increase sample diversity, although excessive tempering can reduce sample diversity for a low memory budget implementation since the samples are no longer a good representation of the stationary distributions. It is, however, difficult to determine the optimal choices of $\alpha$ and $\beta$ because they are heavily scenario-dependent.

\section{Conclusion}\label{s:conclusion}
A linear complexity Gibbs Sampling (GS) framework for GLMB filtering density computation has been developed. Specifically, we use the recently developed tempered GS approach to generate significant component of the GLMB filtering density based on measurements received. Our so-called $TGS^{+}$ framework tailors tempered GS to exploit the structure of the problem for scalable GLMB filtering, and offers trade-offs between tracking accuracy, computational efficiency, and memory load. Further, this framework enables the SGS algorithm in \cite{vo2017efficient} to be implemented with one order of magnitude reduction in complexity, although this is still higher than linear. Comprehensive numerical experiments compare the performance of the linear complexity $TGS^{+}$, and the two special cases of deterministic and completely random-scan GS, namely $DGS^{+}$ and $RGS^{+}$. Our results indicate that, of the linear complexity solutions, $TGS^{+}$ provides the best tracking accuracy. $DGS^{+}$ is simple to implement and slightly faster, with a very small degradation in tracking performance. While $RGS^{+}$ is the cheapest, its tracking performance degrades significantly. Optimizing $TGS^{+}$ could yield better balance between computational load and tracking accuracy, and is a prospective venue for investigation.

Due to the effectiveness of the proposed approach, extension to the multi-dimensional assignment problem \cite{nguyen2014solving} could alleviate the computational bottlenecks in multi-scan and/or multi-sensor truncation. The recent multi-sensor multi-scan GLMB smoother proposed in \cite{moratuwage2022multi} extends the SGS solution to the multi-dimensional assignment problem. While this is the first solution to address multi-dimension assignment problems of such large scale, we envisage that its time complexity can be drastically reduced using our proposed TGS approach.

\section{Appendix}\label{s:appendix}
\subsection{Proof of Proposition \ref{prop:difference}}\label{ss:proof-of-proposition-2}
    \begin{proof}
        If $i=n$, then $\widetilde{\pi}_{i}(j|\gamma'_{\text{+}}(\ell_{\bar{i}}))=\widetilde{\pi}_{i}(j|\gamma_{\text{+}}(\ell_{\bar{i}}))$, because $\gamma'_{\text{+}}(\ell_{\bar{n}})=\gamma_{\text{+}}(\ell_{\bar{n}})$. Suppose $i\neq n$, then we have the following.
        \begin{enumerate}[(i)]
            \item For $j<1$, it follows from Proposition~\ref{prop:conditional} that $\widetilde{\pi}_{i}(j|\gamma'_{\text{+}}(\ell_{\bar{i}}))=\eta_{i}(j)=\widetilde{\pi}_{i}(j|\gamma_{\text{+}}(\ell_{\bar{i}}))$.
            \item For $j>0$, note that $n\in\{\bar{i}\}$ implies $\gamma_{\text{+}}(\ell_{n})\in\{\gamma_{\text{+}}(\ell_{\bar{i}})\}$, and $\gamma'_{\text{+}}(\ell_{n})\in\{\gamma'_{\text{+}}(\ell_{\bar{i}})\}$, i.e.,
        \end{enumerate}
            \begin{align}
                1_{\gamma_{\text{+}}(\ell_{\bar{i}})}(\gamma_{\text{+}}(\ell_{n}))	&=   1,\label{eq:decompostion-proof-1}\\
                1_{\gamma'_{\text{+}}(\ell_{\bar{i}})}(\gamma'_{\text{+}}(\ell_{n}))	&= 1.\label{eq:decompostion-proof-2}
            \end{align}
        Further, since $\gamma'_{\text{+}}$ and $\gamma_{\text{+}}$ are positive 1-1 and differ only at the $n$-th coordinate wherein $\gamma'_{\text{+}}(\ell_{n})\neq\gamma_{\text{+}}(\ell_{n})$, we have
        \begin{align}
            \{\gamma_{\text{+}}(\ell_{\bar{i}})\}	&=   (\{\gamma'_{\text{+}}(\ell_{\bar{i}})\}\backslash\{\gamma'_{\text{+}}(\ell_{n})\})\cup\{\gamma_{\text{+}}(\ell_{n})\},\label{eq:decompostion-proof-3}\\
            \{\gamma'_{\text{+}}(\ell_{\bar{i}})\}	&= (\{\gamma_{\text{+}}(\ell_{\bar{i}})\}\backslash\{\gamma_{\text{+}}(\ell_{n})\})\cup\{\gamma'_{\text{+}}(\ell_{n})\},\label{eq:decompostion-proof-4}
        \end{align}
        and hence $\gamma_{\text{+}}(\ell_{n})\notin\{\gamma'_{\text{+}}(\ell_{\bar{i}})\}$, and $\gamma'_{\texttt{+}}(\ell_{n})\notin\{\gamma_{\texttt{+}}(\ell_{\bar{i}})\}$, i.e.,
        \begin{align}
            1_{\gamma'_{\text{+}}(\ell_{\bar{i}})}(\gamma_{\text{+}}(\ell_{n}))	&= 0,    \label{eq:decompostion-proof-5}\\
            1_{\gamma_{\text{+}}(\ell_{\bar{i}})}(\gamma'_{\text{+}}(\ell_{n})) &= 0.\label{eq:decompostion-proof-6}
        \end{align}
        Now, apply Proposition~\ref{prop:conditional} to each of the following cases.
        
        \noindent (a) $j=\gamma_{\text{+}}(\ell_{n})$:
            
            $\widetilde{\pi}_{i}(j|\gamma'_{\text{+}}(\ell_{\bar{i}}))=\eta_{i}(j)(1-1_{\gamma'_{\text{+}}(\ell_{\bar{i}})}(j))=\eta_{i}(j)$, using \eqref{eq:decompostion-proof-5};~and

            $\widetilde{\pi}_{i}(j|\gamma_{\text{+}}(\ell_{\bar{i}}))=\eta_{i}(j)(1-1_{\gamma_{\text{+}}(\ell_{\bar{i}})}(j))=0$, using \eqref{eq:decompostion-proof-1}.

        \noindent (b) $j=\gamma'_{\text{+}}(\ell_{n})$:

            $\widetilde{\pi}_{i}(j|\gamma'_{\text{+}}(\ell_{\bar{i}}))=\eta_{i}(j)(1-1_{\gamma'_{\text{+}}(\ell_{\bar{i}})}(j))=0$, using \eqref{eq:decompostion-proof-2}; and

            $\widetilde{\pi}_{i}(j|\gamma_{\text{+}}(\ell_{\bar{i}}))=\eta_{i}(j)(1-1_{\gamma{}_{\text{+}}(\ell_{\bar{i}})}(j))=\eta_{i}(j)$, using \eqref{eq:decompostion-proof-6}.

        \noindent (c) $j\neq\gamma'_{\text{+}}(\ell_{n})$ and $j\neq\gamma_{\text{+}}(\ell_{n})$: 
            it follows from \eqref{eq:decompostion-proof-3}, \eqref{eq:decompostion-proof-4} that $j\in\{\gamma'_{\text{+}}(\ell_{\bar{i}})\}$ iff $j\in\{\gamma_{\text{+}}(\ell_{\bar{i}})\}$, i.e., $1_{\gamma'_{\text{+}}(\ell_{\bar{i}})}(j)=1_{\gamma_{\text{+}}(\ell_{\bar{i}})}(j)$. Hence, $\widetilde{\pi}_{i}(j|\gamma'_{\text{+}}(\ell_{\bar{i}}))=\eta_{i}(j)(1-1_{\gamma'_{\text{+}}(\ell_{\bar{i}})}(j))=\widetilde{\pi}_{i}(j|\gamma_{\text{+}}(\ell_{\bar{i}}))$.

        Raising the unnormalized conditionals in (a), (b), and (c) to the power of $\beta>0$, and taking their difference, i.e., $\widetilde{\pi}_{i}^{\beta}(j|\gamma'_{\text{+}}(\ell_{\bar{i}}))-\widetilde{\pi}_{i}^{\beta}(j|\gamma_{\text{+}}(\ell_{\bar{i}}))$, give the desired result.
    \end{proof}

\subsection{Proof of Proposition \ref{prop:convergence}}\label{ss:proof-of-proposition-3}
    \begin{proof}
        Recall from \eqref{eq:selection_probability} that $\rho(i|\gamma_{\text{+}}) \propto \frac{\phi_{i}(\gamma_{\text{+}}\!(\ell_{i})|\gamma_{\text{+}}\!(\ell_{\bar{i}}))}{\pi_{i}(\gamma_{\text{+}}\!(\ell_{i})|\gamma_{\text{+}}\!(\ell_{\bar{i}}))}$ and let $\varpi(\gamma_{\text{+}})=\frac{1}{P}\sum_{i=1}^{P}\rho(i|\gamma_{\text{+}})$. Since $\Gamma\!_{\text{+}}$ is a discrete space then $\pi(\gamma_{\text{+}}^{\thinspace})\in[0,1]$. Furthermore, note from \cite{vo2017efficient} the assumption that $\eta_{i}(j)\in(0,\infty)$ and from \eqref{eq:theta_joint_dis} it follows for any positive 1-1 $\gamma_{\texttt{+}}^{\thinspace}\in\Gamma\!_{\text{+}}$ that $\pi(\gamma_{\text{+}}^{\thinspace})\in(0,1)$. By similar reasoning it follows from \eqref{eq:mixture_proposal} for any positive 1-1 $\gamma_{\text{+}}^{\thinspace}\in\Gamma\!_{\text{+}}$ that $\phi_{i}\in(0,1)$. Hence, $\varpi(\gamma_{\text{+}}^{\thinspace})\in(0,\infty)$ for any positive 1-1 $\gamma_{\text{+}}^{\thinspace}\in\Gamma\!_{\text{+}}$.

        In \cite{vo2017efficient}, it was shown that the standard Gibbs sampler (i.e., SGS) is $\pi$-irreducible. It was also shown in \cite{roberts1994simple} that the TGS extension to generate the Markov Chain $\{\gamma_{\texttt{+}}^{(t)}\}_{t\texttt{=}1}^{\infty}$ is $\pi\varpi$-irreducible. Hence, it follows from \cite[Proposition 1]{zanella2019scalable} that TGS is reversible with respect to $\pi\varpi$. Furthermore, since the importance weight is $w^{(t)}=\left[\varpi(\gamma_{\texttt{+}}^{(t)})\right]^{\texttt{-}1}$ see \cite[(1)]{zanella2019scalable},
        \begin{equation*}
            \lim_{T\rightarrow\infty}\frac{\sum_{t\texttt{=}1}^{T}w^{(t)}h(\gamma_{\texttt{+}}^{(t)})}{\sum_{t\texttt{=}1}^{T}w^{(t)}}=\sum_{\gamma_{\texttt{+}}\in\Gamma\!_{\text{+}}}\pi(\gamma_{\texttt{+}})h(\gamma_{\texttt{+}}),
        \end{equation*}
        for any bounded test function $h$. The convergence result follows noting that $\left[\varpi(\gamma_{\texttt{+}}^{(t)})\right]^{\texttt{-}1}\!\!\propto P/\left\langle \rho(\cdot|\gamma_{\text{+}}^{(t)}),1\right\rangle$. The stability of the importance weights follows from \cite[Proposition 2]{zanella2019scalable}, i.e., the variance of the weights does not grow with the number of coordinates. Specifically,
        \begin{equation*}
            var(w^{(t)}) \leq \max_{i,\gamma_{\texttt{+}}^{(t)}}\frac{\pi_{i}(\gamma_{\texttt{+}}^{(t)}\!(\ell_{i})|\gamma_{\texttt{+}}^{(t)}\!(\ell_{\bar{i}}))}{\phi_{i}(\gamma_{\texttt{+}}^{(t)}\!(\ell_{i})|\gamma_{\texttt{+}}^{(t)}\!(\ell_{\bar{i}}))}-1.
        \end{equation*}
    \end{proof}


\bibliographystyle{IEEEtran}
\balance

\end{document}